\def\eqref#1{equation~\ref{#1}}
\def\1{\bm{1}}
\DeclareMathAlphabet{\mathsfit}{\encodingdefault}{\sfdefault}{m}{sl}
\SetMathAlphabet{\mathsfit}{bold}{\encodingdefault}{\sfdefault}{bx}{n}
\crefname{enumi}{}{}
\crefname{equation}{}{}
\renewcommand{\P}{\mathbb{P}}
\theoremstyle{plain}
\newtheorem{theorem}{Theorem}[section]
\newtheorem{lemma}[theorem]{Lemma}
\newtheorem{corollary}[theorem]{Corollary}
\theoremstyle{definition}
\newtheorem{definition}[theorem]{Definition}
\theoremstyle{remark}
\newtheorem{remark}[theorem]{Remark}
\newtheorem{example}[theorem]{Example}
\title{How iteration composition influences convergence and stability in deep learning}
\author{\name Benoit Dherin$^*$ \email dherin@google.com \\
      \addr Google Research
      \AND
      \name Benny Avelin$^*$ \email benny.avelin@math.uu.se  \\
      \addr Department of Mathematics, Uppsala University
      \AND
      \name Anders Karlsson$^*$ \email anders.karlsson@unige.ch\\
      \addr Department of Mathematics, University of Geneva and Uppsala University
      \AND
      \name Hanna Mazzawi \email mazzawi@google.com\\
      \addr Google Research
      \AND
      \name Javier Gonzalvo \email xavigonzalvo@google.com\\
      \addr Google Research
      \AND
      \name Michael Munn \email munn@google.com\\
      \addr Google Research
}
\begin{document}

\maketitle

\def\thefootnote{*}\footnotetext{These authors contributed equally to this work}
\def\thefootnote{\arabic{footnote}}

\begin{abstract}
Despite exceptional achievements, training neural networks remains computationally expensive and is often plagued by instabilities that can degrade convergence. While learning rate schedules can help mitigate these issues, finding optimal schedules is time-consuming and resource-intensive. This work explores theoretical issues concerning training stability in the constant-learning-rate (i.e., without schedule) and small-batch-size regime. 
Surprisingly, we show that the composition order of gradient updates affects stability and convergence in gradient-based optimizers.  We illustrate this new line of thinking using backward-SGD, which  produces parameter iterates at each step by reverting the usual forward composition order of batch gradients.
Our theoretical analysis shows that in contractive regions (e.g., around minima)  backward-SGD converges to a point while the standard forward-SGD generally only converges to a distribution. This leads to improved stability and convergence which we demonstrate experimentally. While full backward-SGD is computationally intensive in practice, it highlights that the extra freedom of modifying the usual iteration composition  by reusing creatively previous batches at each optimization step may have important beneficial effects in improving training. Our experiments provide a proof of concept supporting this phenomenon. To  our knowledge, this represents a new and unexplored avenue in deep learning optimization.
\end{abstract}

\section{Introduction}

In recent years, neural networks have achieved remarkable success across diverse domains from text generation~\cite{gemini152024, gemini2023, brown2020language} and image creation~\cite{ramesh2022hierarchical, ramesh2021zeroshot, saharia2022photorealistic} to applications in protein folding~\cite{jumper2021highly,jumper2021highlyaccurate} and material discovery~\cite{merchant2023scaling}. However, their training remains challenging and computationally expensive. One of the reasons for this is due to training instabilities which often occur~\mbox{\cite{wang2023exploring,li2019variance,cohen2021gradient,chen2018stability}} and which produces hard to interpret loss curves, wasted computation time, and potentially failed experiments. One way to view this  challenge is as a trade-off between stability and performance:  hyperparameter settings that often yield better test performance, such as higher learning rates and smaller batch sizes, tend to exacerbate these instabilities.

As a basic example, consider the training batch size. For many common optimizers, smaller batch sizes often lead to improved test performance. In fact, recent research has shown that small batches induce a form of implicit regularization~\cite{Novack2023Disentangling,smith2021on,dherin2022why,keskar2017large,ali2020implicit} which benefits generalization. On the other hand, the greater variability of small batches tends to exacerbate oscillations of the training loss, prolonging time to convergence. In this work, we demonstrate that these instability and convergence issues associated with small batch sizes can be mitigated without loss of generalization power by reversing the composition order used to produce each iterate. The general line of thinking behind our approach consists of leveraging not only the current batch, but also the previous batches to produce the current parameter iterate. The creative reuse of previous batches at a given step has been already explored in~\cite{Choi2019FasterNN} for instance in order to speed up training by composing previous batch gradient updates to the current iterate while waiting for the data pipeline to deliver a new batch in case of IO bound situations. In this work, we reverse the composition order of all the batches received so far from initialization and show theoretically and practically that the sequence of iterates produced this way enjoys better convergence and stability properties than the usual composition order of the batch gradient updates. We now describe below this reverse composition procedure in details.

Namely, standard training algorithms, such as stochastic gradient descent (SGD), Adam, and other gradient-based optimizers, are iterative processes.  At each step, these algorithms update the network parameters $\theta$ using a randomly sampled data batch $B_i$.  This update can be formalized as a transformation, $\theta' = T_i(\theta)$, where $\theta'$ represents the new parameter value. 
Because at each step $i$ the batch $B_i$ is randomly sampled, the update operator $T_i$ can be formalized as a random operator.  For instance, in the case of SGD, the random operator is defined by the update rule $T_i(\theta) = \theta - h \nabla L_{B_i}(\theta)$, where $L_{B_i}$ is the loss function evaluated on batch $B_i$ and $h$ is the learning rate.  The sequence of parameter updates generated by these iterations, starting from an initial parameter value $\theta$, defines the standard learning trajectory, which we will refer to as the \emph{forward trajectory}:
\begin{equation*}
    \theta_0 = \theta,\quad 
    \theta_1 = T_1(\theta), \quad 
    \theta_2 = T_2T_1(\theta),\quad \dots, \quad  
    \theta_n = T_n T_{n-1} \cdots T_1(\theta).
\end{equation*}
{\bf Notation:} To conserve notation, we  write $T_i T_j$ to denote the mapping composition $T_i \circ T_j$; i.e., $T_i T_j(\theta) = T_i(T_j(\theta))$.

Moderate or large learning rates and small batches, when used with the standard forward trajectory for training, tend to destabilize common gradient-based optimizers, causing convergence and stability issues. Consequently, learning rate schedules have become essential as an attempt to encourage proper convergence of the loss during training.  Our key contribution, which we support with both theoretical analysis and empirical results (see \cref{figure:training_losses_cifar10_resnet18,appendix:experiment_details}), is in demonstrating that reversing the batch composition order to produce an iterate at a given step--what we call the \emph{backward trajectory}--leads to significantly more stable convergence.  Specifically, the backward trajectory consists of iterates generated by composing the training batches in reverse order; i.e., 
\begin{equation*}
    \theta_0 = \theta,\quad 
    \theta_1 = T_1(\theta), \quad 
    \theta_2 = T_1 T_2(\theta),\quad \dots, \quad 
    \theta_n = T_1 T_2 \cdots  T_n(\theta).
\end{equation*}
For the sake of clarity, let us work out the first two iterates both for forward and backward. For instance for SGD, if we receive first the batch $B_1$ and then the batch $B_2$ in sequence in the training loop, then at the first step both the backward and forward iterates coincides with $T_1(\theta) = \theta - h \nabla L_{B_1}(\theta)$, where $\theta$ is the randomly initialized parameter value. However at the second step the forward iterate becomes
\begin{align*}
    T_2T_1(\theta) 
    &= T_1(\theta) - h \nabla L_{B_2}(T_1(\theta)) \\
    &= \theta - h \nabla L_{B_1}(\theta) - h\nabla L_{B_2}(\theta - h \nabla L_{B_1}(\theta)).
\end{align*}
On the other hand, the second step of the backward iterate is
\begin{align*}
    T_1T_2(\theta) 
    &= T_2(\theta) - h \nabla L_{B_1}(T_2(\theta)) \\
    &= \theta - h \nabla L_{B_2}(\theta) - h\nabla L_{B_1}(\theta - h \nabla L_{B_2}(\theta)).
\end{align*}
A naive (and computationally intensive) implementation of the backward optimization is depicted in \cref{figure:backward_flow} where all the batches received so far at each step are re-processed from scratch in the reverse order in which they were received from the initialization point. 

\begin{figure}[t]
    \centering
    \begin{adjustbox}{max width=1\linewidth}
        \input{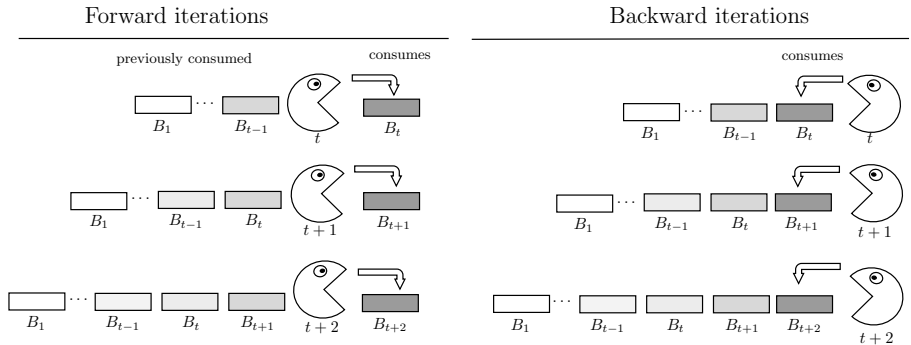}
    \end{adjustbox}
    \caption{Naive implementation of the backward dynamics: Forward iterations (left) and backward iterations (right). The training steps are represented by Pac-men consuming batches. Forward iterations maintain a training state and consume a new batch at each step, while backward iterations restart the training and consume all the batches received so far in reverse order.}
    \label{figure:backward_flow}
\end{figure}

\paragraph{Main contributions:} The main contributions of this paper are to show theoretically (\cref{lemma:contraction_principle,thm:backward_forward}) and experimentally (\cref{figure:training_losses_cifar10_resnet18,appendix:experiment_details}) that backward optimization has better convergence and stability properties than the standard forward optimization. As already known (see related work in \cref{section:related_work}), forward trajectories do not generally converge toward points but rather toward a probability distribution of the iterates (in the fixed learning-rate regime). 

We argue that the advantage of backward trajectories comes from their convergence toward actual points (see \cref{lemma:contraction_principle}) sampled from the forward distribution (see \cref{thm:backward_forward}).
We prove this using a generalization of the Banach fixed point theorem when the random maps $T_i$ become contractions. 
Note that backward optimization can theoretically be applied to any gradient-based optimizer (not only to SGD) to improve stability and convergence, since our theoretical statements hold at the level of the $T_i$'s, no matter what their actual forms are. 
To our knowledge, these concepts are new in deep learning optimization.
The main goal of this paper is to expose how this phenomenon manifests in deep learning. We defer engineering applications leveraging this phenomenon (like efficient implementations of the backward SGD) to future work, while outlining a few potential directions at the paper's conclusion.

Overall, this work should be understood as a proof of concept showing that reversing the iteration composition order can significantly improve stability and convergence in stochastic optimization.

\subsection{Related work}\label{section:related_work}

\paragraph{Convergence of SGD:}
A number of works prove under different assumptions that the (forward) iterates of SGD with constant learning rate do not converge toward points but rather toward a stationary probability distribution:~\citet{merad2023convergence} and~\citet{dieuleveut2020bridging} show this for the strongly convex case;~\citet{shirokoff2024convergencemarkovchainsconstant} treats the non-convex case with separable loss;~\citet{babichev2018constantstepsizestochastic} focus on losses coming from exponential family models;~\citet{cheng2020stochastic} quantify the rate of convergence of SGD to its stationary distribution in non-convex optimization problems; see also~\citet{dieuleveut2016nonparametric} and~\citet{meyn_tweedie_1993}. In~\citet{huang2017snapshot}, the authors leverage the fact that SGD with fixed large learning rate oscillates between different solutions in order to create a cheap average of models by saving the explored parameters along the way. For convergence to a particular solution, the forward order for SGD needs an extra decaying learning rate schedule as shown in~\citet{robbins1951a_stochastic} or~\citet{mertikopoulos2020on}.

\paragraph{Contractions in deep learning:} 
From our point of view, one important feature that leads to convergence for the backward trajectory is the contraction property of the random operators. This notion (which we believe is under-exploited in deep learning) has surfaced in different contexts in deep learning: see~\citet{BFGQ19},~\citet{QW19}, and~\citet{AK22}. 

\paragraph{Markov chains, iterated functions, and MCMC:}
In many Markov Chain Monte Carlo (MCMC) algorithms the goal is to sample from a distribution $\mu$. The idea is to construct a Markov chain with stationary distribution $\mu$ and then run the chain for a long time to get samples from $\mu$. The Propp-Wilson algorithm uses a form of backward iterations to accelerate the convergence toward samples from the distribution (see~\citet{propp1996exact}). More generally, the idea of backward dynamics is hidden in many constructions in Markov chain theory when the Markov chain is given by iterations of random operators as illustrated in~\citet{diaconis1999iterated}. In particular, they prove a general result concerning the convergence in distribution of these types of iterated Markov chains using the backward dynamics.

\paragraph{Stability in deep learning:}
Already instability issues appear in the full-batch regime, and a number of theoretical works have studied it under the heading \emph{edge of stability}~\cite{cheng2020stochastic, Wu2024large_stepsize, cai2024large_stepsize}. Other works have also studied stability in the large batch regime after a batch size saturation takes place using the implicit conditioning ratio~\cite{lee2022trajectory,Agarwala2024High}. In this work we focus on the stochastic or small batch setting. Our findings do not really matter for the full-batch setting since backward and forward iterates then coincide. In the context of physics-informed neural networks, it has been observed that the gradient field is stiff, producing instabilities in learning trajectories~\cite{Wang2020UnderstandingAM}. To remedy this~\cite{li2023implicitstochasticgradientdescent} propose a backward Euler scheme to stabilize training in this context. However the backward Euler method is an implicit Runge-Kutta method, which is different from using iteration backward.

\paragraph{Sample order:}
Curriculum learning~\cite{soviany2022curriculumlearningsurvey} leverages the impact of sample order for generalization~\cite{mange19dataorder} by organizing training examples in a meaningful sequence, typically starting with simpler examples and gradually introducing harder ones, thereby optimizing the learning process and improving model performance. The backward optimization can be viewed as an automated form of curriculum, mitigating the forgetting of previous examples as new examples are added; a phenomenon which is also related to catastrophic forgetting and the stability gap in continual learning (see~\citet{lange2023continual} for instance).

\section{A backward contraction principle} \label{sec:contractions}

The contraction mapping principle, also called the Banach fixed point theorem, is a cornerstone result in mathematics and science, in particular for finding solutions of equations (Newton's method) or of differential equations (Picard's method of successive approximation).
It is also behind Google's PageRank algorithm~\cite{PageBrin98}.
It concerns the existence of a fixed point of maps which
are uniform contractions of a complete metric space\footnote{Recall that a complete metric space $(\Omega, d)$ is a set $\Omega$, equipped with a distance metric $d(\theta_1,\theta_2)$ for which all Cauchy sequences (i.e., sequences $\theta_n$ such that $d(\theta_n, \theta_m) \rightarrow 0$ as $n,m\rightarrow 0$) converge to points in the space; typically $\mathbb R^d$ with the Euclidean distance is a complete metric space.}. In detail, let
$(\Omega,d)$ be a complete metric space and $T:\Omega\rightarrow \Omega$ be a continuous map. If there exists $0\leq k<1$ such that
\[
    d(T(\theta_1),T(\theta_2))\leq k\cdot d(\theta_1,\theta_2),
\]
for all $\theta_1,\theta_2\in \Omega$, then $T$ is called a \emph{uniform contraction}. The fixed point theorem now states that $T$ has a fixed
point, that is, $\theta_{0}\in \Omega$ such that $T(\theta_{0})=\theta_0$. This point can be found by iterating the map:
\[
T^{n}(\theta)\rightarrow \theta_{0}
\]
for any $\theta\in \Omega$ as $n\rightarrow\infty$. For example, in PageRank~\cite{PageBrin98}, one iterates the PageRank matrix from 50 to 100 times and the result is a very good approximation of the PageRank vector $\theta_0$.
Another application is the standard proof of convergence for (full-batch) gradient descent around a minimum, which can be interpreted as an application of the Banach fixed point theorem as detailed in the next example:

\begin{example} {\bf (Full-batch gradient descent convergence.)} \label{example:full_batch_convergence}In this case the operator is $T(\theta) = \theta - h\nabla L(\theta)$ for a loss function $L$. The idea of the proof is to choose the learning rate $h$ small enough so that $T$ becomes a contraction. More precisely, around a minimum $\theta^*$ the operator $T$ can be approximated using a first-order expansion of the gradient around the minimum as $T(\theta) = \theta^* + (I - hH)(\theta-\theta^*)$,
where $H=\nabla^2 L(\theta^*)$. Now we have that 
\begin{equation}
    \|T(\theta_1) - T(\theta_2)\| \leq \|I - hH\|_{\textrm{op}} \|\theta_1 - \theta_2\|,
\end{equation}
where $\|I - hH\|_{\textrm{op}}$ is the operator norm of $1 -hH$, that is the operator maximum eigenvalue: $\max_i | 1 - h\lambda_i|$ (here the $\lambda_i$'s are eigenvalues of $H$). Now it is easy to verify that $ \|I - hH\|_{\textrm{op}} < 1$ if and only if the learning rate is strictly smaller than $2/\lambda_{\textrm{max}}$ where $\lambda_{\textrm{max}}$ is the largest eigenvalue of $H$. Convergence for that setting follows from the Banach fixed point theorem. 
\end{example}

The generalization of this convergence argument to Stochastic Gradient Descent (SGD) is actually problematic. The main reason is that now we do not have a single operator but a sequence of them $T_1, T_2, \dots$, each computing the loss gradient on a different batch of data. Even if each of the operators is a contraction it turns out that usual forward iterations will not converge to a point in general as illustrated in \cref{example:counter-example}.
\cref{lemma:contraction_principle}, however,  establishes the convergence of the backward iterations.

\begin{theorem}\label{lemma:contraction_principle}
    (Backward contraction mappings principle) Let $T_{i}$ be a sequence of continuous self-maps of a complete metric space.
    Assume 
    $T_i$'s are uniform
    contractions, with a certain $k<1$ in common, and for some $\theta$ there is a constant $D$ such that,
    \begin{equation}\label{condition:boundedness}
        d(\theta,T_{i}(\theta))<D\quad \textrm{for all} \; i.
    \end{equation}
    Then for any $\theta \in \Omega$ the backward iterates 
    \[
    \theta_n = T_{1} T_{2}\cdots T_{n}(\theta)
    \]
    converge to a point $\theta^*$ as $n\rightarrow \infty$. Moreover, the convergence rate is exponential: i.e,  there is a constant $C$ depending on $\theta$ such that 
    \[
    d(\theta^*, T_{1} T_{2}\cdots T_{n}(\theta))\leq C\cdot k^{n}.
    \]
\end{theorem}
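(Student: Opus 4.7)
The plan is to show that the backward iterates $(\theta_n)$ form a Cauchy sequence in $(\Omega,d)$; completeness then produces the limit $\theta^*$, and a tail estimate gives the exponential rate. The core reason this works (and fails for forward iterates) is that the fresh operator $T_{n+1}$ enters at the \emph{innermost} position of the backward composition, so its contribution is dampened by all $n$ preceding contractions.

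First I would record the one-step estimate. Since
\[
    \theta_n = T_1\cdots T_n(\theta), \qquad \theta_{n+1} = T_1\cdots T_n\bigl(T_{n+1}(\theta)\bigr)
\]
differ only in their innermost argument, and since the composition of $n$ contractions of ratio $k$ is itself a contraction of ratio at most $k^n$, one obtains
\[
    d(\theta_n,\theta_{n+1}) \leq k^n\, d\bigl(\theta,T_{n+1}(\theta)\bigr).
\]
Next I would uniformly bound $d(\theta,T_{n+1}(\theta))$ in $n$. If the starting point $\theta$ is itself the witness in hypothesis~\eqref{condition:boundedness}, this factor is at most $D$. Otherwise, letting $\theta_\star$ denote the witness, the triangle inequality together with the contraction property gives
\[
    d(\theta,T_i(\theta)) \leq d(\theta,\theta_\star) + d(\theta_\star,T_i(\theta_\star)) + d(T_i(\theta_\star),T_i(\theta)) \leq (1+k)\, d(\theta,\theta_\star) + D =: D',
\]
a constant depending only on the starting point $\theta$.

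Summing via the triangle inequality, for $m>n$,
\[
    d(\theta_n,\theta_m) \leq \sum_{j=n}^{m-1} d(\theta_j,\theta_{j+1}) \leq D' \sum_{j=n}^{m-1} k^j \leq \frac{D'\, k^n}{1-k},
\]
which tends to zero as $n\to\infty$, so $(\theta_n)$ is Cauchy. Completeness of $\Omega$ yields $\theta_n\to\theta^*$ for some $\theta^*\in\Omega$. Sending $m\to\infty$ in the display above and using continuity of $d$ produces $d(\theta^*,\theta_n) \leq C k^n$ with $C=D'/(1-k)$, the claimed rate.

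There is no real technical obstacle; the only nontrivial point is the conceptual one noted at the outset. For forward iterates $T_n\cdots T_1(\theta)$ the freshly drawn operator sits \emph{outermost}, so its discrepancy is not damped by the earlier contractions and consecutive iterates need not be close, which is exactly why forward iterates generally fail to converge as points. Placing the new operator innermost, as in the backward scheme, multiplies its contribution by $k^n$ and makes the telescoping differences summable in a geometric series; the argument then reduces to a direct adaptation of the Banach fixed point proof.
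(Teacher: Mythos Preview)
Your proof is correct and essentially the same as the paper's: both show the uniform bound $d(\theta,T_i(\theta))\le D'$ transfers from the witness to any starting point via the triangle inequality, then use that the backward order lets the $n$ outermost contractions damp the difference $d(\theta_n,\theta_m)$ by $k^n$, yielding a geometric series summing to $D'k^n/(1-k)$; completeness and $m\to\infty$ finish it. The only cosmetic difference is that you telescope first into consecutive differences $d(\theta_j,\theta_{j+1})\le k^j D'$ and then sum, whereas the paper peels off the common prefix $T_1\cdots T_n$ first to reach $k^n d(\theta,T_{n+1}\cdots T_m(\theta))$ and then telescopes the inner part --- the resulting bound and constant are identical.
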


\begin{proof}
    Condition~\cref{condition:boundedness} expresses that the distance $d(\theta, T_i(\theta))$ is uniformly bounded by a constant $D$ for all $T_i's$ for a point $\theta$. Let us show that if this happens for a single point $\theta$, this happens for all points, provided we change the constant $D$. To see this, take another point $\tilde \theta$. We will compute another constant $\tilde D$ such that $d(\tilde \theta, T_i(\tilde \theta)) < \tilde D$. Namely, using the triangle inequality and the fact that $T_i$ are uniform contractions, we obtain that
    \begin{eqnarray*}
        d(\tilde \theta,T_i(\tilde \theta)) 
        & \leq & d(\tilde \theta, \theta)+d(\theta,T_i(\theta))+d(T_i( \theta),T_i (\tilde \theta)) \\
        & \leq & D+(1+k)d(\theta, \tilde \theta) = \tilde D.
    \end{eqnarray*}
    Now, we want to prove that $\theta_n$ is a Cauchy sequence, i.e, that $d(\theta_n, \theta_m)$ tends to zero for $m>n$ as $n\rightarrow \infty$. Since $\Omega$ is assumed to be complete, this will mean that the backward iterates $\theta_n$ converge toward a point $\theta^*$.  
    The idea is to bound the quantity
    \begin{eqnarray*}
    A & = &  d(\theta_n, \theta_m) \\
      & = & d(T_{1}T_{2}\cdots T_{n}(\theta),\,T_{1}T_{2}\cdots T_{m}(\theta)).
    \end{eqnarray*}
    Because of the backward order (note: the forward order would not allow that), we can apply the contraction property $n$ times (since $m> n$), yielding:
    \begin{eqnarray*}
        A & \leq & k^{n}\cdot d(\theta,\,T_{n+1}\cdots T_{m}(\theta)).
    \end{eqnarray*}
Now a simple application of the triangle inequality produces
    \begin{eqnarray*}
        A & \leq &  k^{n}\Big(d(\theta,T_{n+1}(\theta)) \\
          &      & + d(T_{n+1}(\theta),\,T_{n+1}T_{n+2}(\theta))+\cdots\\
          &      & \cdots +d(T_{n+1}\cdots T_{m-1}(\theta),\,T_{n+1}\cdots T_{m}(\theta))\Big).
    \end{eqnarray*}
At this point, we can use the condition in \cref{condition:boundedness} for the first term $d(\theta,T_{n+1}(\theta)) < D$ and in conjunction with the contraction property for the subsequent terms in the sum, yielding:
    \begin{eqnarray*}
        A  & \leq &  k^{n}\left(D+D\cdot k+\cdots + D\cdot k^{m-n+1}\right) \\
           & \leq & D\cdot k^n  \cdot \left( \frac{1 - k^{n-m+2}}{1-k}\right) \\
          & \leq & \frac{D\cdot k^{n}}{1-k},
    \end{eqnarray*}
    where we used the sum of a geometric series. Now, this yields that $A\rightarrow 0$ as $n\rightarrow\infty$, meaning that the sequence $\theta_n$
    is a Cauchy sequence and thus converges to a certain point $\theta^*$
    since $\Omega$ is complete. 
    Then, taking the limit $m\rightarrow \infty$ we obtain
    \[
    d(\theta_n,\theta^*) \leq \frac{D\cdot k^{n}}{1-k} = C\cdot k^n,
    \]
    with $C:=\frac{D}{1-k}$, which shows the exponential convergence rate.
\end{proof}

\begin{remark}
\label{remark:assumptions_and_generalization_for_deep_learning}

The contractivity assumption for the operators $T_i$'s is in general too restrictive for deep-learning settings. However, the proof of \cref{lemma:contraction_principle} still works in a more generalized setting that encompasses the non-convexity of deep-learning loss-landscapes. The price to pay though is the introduction of the more abstract notion of a \emph{pseudo-metrics} for which the argument in the proof of \cref{lemma:contraction_principle} still works verbatim. We give the details of this more complicated approach in \cref{appendix:lossmet}.

As for the condition in \cref{condition:boundedness}, let us verify that it is satisfied for SGD in the case of an overparametrized model like a neural network. Namely, in this setting, we generally consider losses of the form $L(\theta)= \frac {1}{N} \sum_{i=1}^N L_i(\theta)$, where $L_i$ is the loss computed on batch $B_i$ with $L_i(\theta)$ being non-negative. Because of the over-parameterization, there generally exists points $\theta^*$ where the full loss vanishes. This in turns implies that $L_i(\theta^*) = 0$ for all $i$. For SGD, the operators are $T_i(\theta) = \theta - h\nabla L_i(\theta)$. Now it is easy to see that \cref{condition:boundedness} is satisfied:
$
d(\theta^*, T_i(\theta^*)) = \|\theta^* - \theta^* - h\nabla L_i(\theta^*)\| = 0,
$
since the global zero $\theta^*$ is a critical point for all the $L_i$'s. 

\end{remark}

Now here is a major point: The contraction mapping principle fails with the forward order. As the next example demonstrates, it is not enough for the operator $T_i$'s to be uniform contractions for the forward sequence of iterates to converge, while this is always true for the backward sequence because of \cref{lemma:contraction_principle}.

\smallskip

\begin{example}\label{example:counter-example}
({\bf Forward iterations counter-example.})
Here is an extreme example illustrating the convergence failure for the forward sequence of iterates (even when the maps are uniform contractions), while the backward sequence converges to a single point under the conditions of \cref{lemma:contraction_principle}. Consider the two constant maps $S(\theta)=x_0$, and $U(\theta)=y_0$, which are contractions with $k=0$. Assume that $x_{0}\neq y_{0}$,
now also assume that each $T_{i}$ is either equal to $S$ or $U$ with equal probability to be selected. Then, independently of $\theta$, the forward iterates
\[
    T_{n}T_{n-1}\cdots T_{1}(\theta)
\]
will jump between $x_{0}$ and $y_{0}$, according to whether $T_{n}$
is $S$ or $U$ at that particular $n$. Thus, no convergence to a single point is possible, although the sequence converges to a probability  uniformly distributed in the two outcomes, since for each forward iterate either outcome has probability $1/2$. On the contrary, the backward trajectory
\[
    T_1 T_2\cdots T_n(\theta)
\]
will always converge to a single point determined by the first element in the sequence: either to $x_0$ if $T_1 = S$ or to $y_0$ if $T_1 = U$.
\end{example}

\subsection{A fundamental example of a contraction}

We now describe how the SGD updates are uniform contractions when the batch losses are strongly convex near a minimum.
First, consider a strongly convex smooth function $L:\mathbb{R}^d \to \mathbb{R}$. Recall that $L$ is \emph{strongly convex} if there is a constant $m>0$ such that for all $\theta_1,\theta_2 \in \mathbb{R}^d$
\begin{equation} \label{eq:strongly_convex}
    \langle \nabla L(\theta_1) - \nabla L(\theta_2), \theta_1-\theta_2 \rangle \geq m \left \| \theta_1-\theta_2 \right \|^2,
\end{equation}
where $\langle a, \,b\rangle = a^T b$ is the inner product between two vectors. Furthermore, we assume that there is a constant $M > m$ such that
\begin{equation}\label{eq:S11}
    \left \| \nabla L(\theta_1) - \nabla L(\theta_2) \right \| \leq M \left \| \theta_1-\theta_2 \right \|.
\end{equation}
This is sometimes denoted as $L \in \mathcal{S}_{m,M}^{1,1}(\mathbb{R}^d)$, see \cite{nesterov}.
The following lemma (whose proof is in \cref{appendix:proof_of_contraction}) shows that for strongly convex functions we can always find a learning rate small enough so that the corresponding gradient descent update is a contraction. 
\begin{lemma} \label{lem:contraction}
    Let $L(\theta)$ satisfy \cref{eq:strongly_convex,eq:S11} and define the map $T(\theta)=\theta-h\nabla L(\theta)$.
    Then
    \begin{equation*}
        \left \| T(\theta_1)-T(\theta_2) \right \| \leq \sqrt{1 - 2 h m + h^2 M^2} \left \| \theta_1-\theta_2 \right \|.
    \end{equation*}
    In particular, for small enough $h \in (0,1)$ (depending on $m$ and $M$) the map $T$ is a uniform contraction.
\end{lemma}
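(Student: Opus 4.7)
The plan is to square both sides and expand. Writing $T(\theta_1)-T(\theta_2) = (\theta_1-\theta_2) - h\bigl(\nabla L(\theta_1) - \nabla L(\theta_2)\bigr)$ and applying the identity $\|u-v\|^2 = \|u\|^2 - 2\langle u,v\rangle + \|v\|^2$ with $u = \theta_1-\theta_2$ and $v = h(\nabla L(\theta_1) - \nabla L(\theta_2))$, I obtain three terms: $\|\theta_1-\theta_2\|^2$, the cross term $-2h\langle \theta_1-\theta_2,\nabla L(\theta_1) - \nabla L(\theta_2)\rangle$, and the squared-norm term $h^2 \|\nabla L(\theta_1) - \nabla L(\theta_2)\|^2$. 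Each of these is controlled by exactly one of the two strict-convexity hypotheses.

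For the cross term, the first hypothesis (strong monotonicity with constant $m$) gives a lower bound on the inner product; multiplying by $-2h<0$ flips this into the upper bound $-2hm\|\theta_1-\theta_2\|^2$. For the third term, the Lipschitz continuity of $\nabla L$ (constant $M$) gives the upper bound $h^2 M^2 \|\theta_1-\theta_2\|^2$. Summing these three bounds yields
\begin{equation*}
\|T(\theta_1)-T(\theta_2)\|^2 \leq \bigl(1 - 2hm + h^2 M^2\bigr)\|\theta_1-\theta_2\|^2.
\end{equation*}
Since $M > m$, the discriminant $4m^2 - 4M^2$ of the quadratic $h \mapsto 1 - 2hm + h^2 M^2$ is negative, so the factor is always strictly positive and I may take square roots to recover the stated inequality.

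For the second assertion, the contraction factor $\sqrt{1 - 2hm + h^2 M^2}$ is strictly less than $1$ precisely when $h(hM^2 - 2m) < 0$, i.e., when $0 < h < 2m/M^2$. Hence any $h \in \bigl(0,\min\{1, 2m/M^2\}\bigr)$ renders $T$ a uniform contraction on the ambient Euclidean space, as required. No real obstacle is anticipated: this is a one-line expansion followed by the two supplied bounds, and the only care needed is to exploit the sign of the cross term correctly so that the strong-convexity lower bound becomes an upper bound after multiplication by $-2h$.
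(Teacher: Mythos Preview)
Your proof is correct and follows exactly the same approach as the paper: expand $\|T(\theta_1)-T(\theta_2)\|^2$, bound the cross term via strong monotonicity and the quadratic term via Lipschitz continuity of $\nabla L$, then take square roots. You actually go slightly further than the paper by explicitly computing the range $0<h<2m/M^2$ for which the contraction factor is below $1$, which the paper leaves implicit.
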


Now if we suppose that batch losses $L_i$'s are strongly convex (possibly near global minima), then \cref{lem:contraction} tells us that the SGD maps $T_i(\theta) = \theta - h\nabla L_i(\theta)$ are uniform contractions. Thus, we can apply \cref{lemma:contraction_principle} and see that the sequence of backward iterates, $\theta_n = T_1 T_2\cdots T_n(\theta)$, converges to a point.

\subsection{Relation between forward, backward, and generalization}

In this section we address the question of the generalization power of backward iterates, which we argue is the same as that of the forward iterates. The relationship with generalization is implicit through \cref{thm:backward_forward} below which states that backward convergence points are distributed according to the forward stationary distribution. It follows that the best test-performance forward-solutions can be reached by the backward trajectories with the same probability. This a good thing  as it shows that the backward trajectories are more stable but without test-performance reduction on average, which is often not the case as for instance the increased stability obtained by increasing the learning rate or decreasing the batch-size (see~\cite{dherin2022why}). 
In particular, the bias of backward SGD is statistically equivalent to that of standard SGD.
Consequently, the convergence point of the backward iterates cannot outperform the best solutions attainable by the forward iterates. 

The goal of this work aims to demonstrate that the composition order creates two different processes, which we could described intuitively as follows:

\begin{itemize}
\item The forward iterate process converges toward a distribution, ``forgetting'' the initial batches and being most influenced by the recent ones. This makes this process robust to a bad start but leaves it perpetually noisy.

\item The backward iterate process converges toward a point, ``remembering'' the initial batches but ``forgetting'' the recent stochastic noise.
\end{itemize}

\cref{thm:backward_forward}  formalizes this intuition and essentially generalizes the situation already seen in \cref{example:counter-example}, an actual realization of a forward trajectory converges to a distribution which is uniformly distributed between the two outcomes. On the other hand, any realization of the backward trajectory always converges to a point sampled from this distribution. Let's formalize this for the general case.

First of all, recall that a map $T:\Omega\rightarrow \Omega$ produces a corresponding push-forward map $T_*:P(\Omega)\rightarrow P(\Omega)$ at the level of probability measures $P(\Omega)$ on $\Omega$. Recall that a probability measure $\mu\in P(\Omega)$ associates to each subset $A\subset \Omega$ a number $\mu(A)$ modelling the probability $\textrm{Prob}_\mu(\theta \in A)$ of $\theta$ being in the subset $A$ if $\theta$ is sampled from $\mu$.
Now, if $\mu$ is a probability measure on $\Omega$ then the push forward $T_* \mu (A) = \mu (T^{-1}(A))$ for $A\subset \Omega$ computes the probability $\textrm{Prob}_\mu(T(\theta)\in A)$ of $T(\theta)$ being in $A$ if $\theta$ is sampled according to $\mu$. 

Therefore, the forward sequence $(T_n \cdots T_2T_1):\Omega\rightarrow \Omega$ induces a push-forward $(T_n \cdots T_2T_1)_*$ that maps an initial distribution $\mu_0$ to the probability measure $\mu_n = (T_n \cdots T_2T_1)_* \mu_0$ modeling the probability distribution of the $n^{th}$ forward iterate $\theta_n = T_n \cdots T_2T_1(\theta_0)$ when the initial point $\theta_0$ is randomly sampled from $\mu_0$. Observe that the forward iterates $\theta_n$ form a sequence of random variables, which is a Markov chain (since for the forward iterates we have that $\theta_n = T_n(\theta_{n-1})$). Many previous works (see related work in \cref{section:related_work}) have shown under different assumptions that this sequence of forward iterates do not converge to a point when using fixed learning rates, but rather their probability distributions $\mu_n$ converge to a limiting probability distribution, called the stationary distribution of the Markov chain.
In other words, starting at initial point $\theta_0$ the distribution $\mu_n = (T_n)_*\cdots (T_1)_* \delta_{\theta_0}$ of the forward iterate $\theta_n$ converges to a stationary distribution $\mu_{\theta^*}$. Recall that $\delta_{\theta_0}$ is the Dirac measure concentrated at $\theta_0$ (i.e., $\delta_{\theta_0}(A)$ is $1$ if $A$ contains $\theta_0$ and 0 otherwise).

On the contrary, as we have seen in \cref{lemma:contraction_principle}, the backward iterates converge to actual points in regions where the maps $T_i$'s become contractions (which we will call \emph{contractive regions}). The following theorem (whose proof is in \cref{appendix:proof_of_theorem_2.6}) shows the relation between the point-wise convergence of the backward iterates and the distributional convergence of the forward iterates:

\begin{theorem} \label{thm:backward_forward}
Consider a sequence $\{T_i\}$, $i=1,2, \dots$ of independent and identically distributed random operators. Suppose that for $\theta_0 \in \Omega = \mathbb{R}^d$ the backward iterates converge to a random point (randomness is due to the sampling of the random operators $T_i$'s):
\[
 T_1T_2\cdots T_n(\theta_0) \longrightarrow \theta^* \quad\textrm{as}\quad n\rightarrow \infty.
\]
Then the probability distribution of the forward iterates from $\theta_0$ converge (in distribution) to a stationary probability measure $\mu_{\theta^*}$. Moreover, the random point $\theta^*$ is distributed according to the same forward iterate stationary distribution $\mu_{\theta^*}$. 
\end{theorem}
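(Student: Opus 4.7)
The plan is to use the classical Diaconis--Freedman distributional identity between forward and backward compositions of i.i.d.\ random maps, and then to extract stationarity by a continuity argument.

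First, I would exploit the i.i.d.\ assumption to observe that the ordered tuple $(T_1,T_2,\dots,T_n)$ has the same joint law as its reversal $(T_n,T_{n-1},\dots,T_1)$. Therefore, evaluated at the fixed initial point $\theta_0$, the forward and backward iterates are equal in distribution:
\[
    T_n T_{n-1}\cdots T_1(\theta_0) \;\stackrel{d}{=}\; T_1 T_2\cdots T_n(\theta_0).
\]
This symmetry is the engine of the whole argument; everything else is packaging.

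Second, the hypothesis gives almost sure convergence of the backward iterates to the random point $\theta^*$. Almost sure convergence implies convergence in distribution, so the law of $T_1 T_2\cdots T_n(\theta_0)$ converges weakly to $\mu_{\theta^*} := \mathrm{Law}(\theta^*)$. Combining this with the equality in distribution from the previous step yields that the forward iterates also converge in distribution to $\mu_{\theta^*}$. This establishes at once both that the forward iterates admit a limiting distribution and that $\theta^*$ is distributed according to this same limiting distribution, which are the two claims of the theorem.

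Third, to identify $\mu_{\theta^*}$ as stationary for the Markov chain generated by the forward dynamics, I would consider a fresh independent copy $T$ with the same law as $T_1$. Writing the forward chain as $\theta_{n+1} = T_{n+1}(\theta_n)$ and applying the push-forward operator $\mathcal{P}\mu := \mathbb{E}[T_*\mu]$ associated to the Markov kernel, we have $\mu_{n+1} = \mathcal{P}\mu_n$. Passing to the weak limit $\mu_n \Rightarrow \mu_{\theta^*}$ and using continuity of $\mathcal{P}$ on the space of probability measures (which follows from the almost sure continuity of the $T_i$, themselves continuous by assumption) gives $\mu_{\theta^*} = \mathcal{P}\mu_{\theta^*}$, i.e.\ stationarity.

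The main obstacle is the third step: the first two steps reduce to the elegant reversal identity plus a soft convergence implication, but verifying stationarity cleanly requires a bit of care with weak continuity of the averaged push-forward. I would handle this either by invoking the continuous mapping theorem (test against bounded continuous functions $f$, noting $\int f\, d\mu_{n+1} = \mathbb{E}[f(T_{n+1}(\theta_n))] = \mathbb{E}[(T f)(\theta_n)]$ where $T f(\theta) := \mathbb{E}[f(T(\theta))]$ is bounded continuous under mild regularity), or by appealing directly to standard facts about Feller Markov chains. Everything else is bookkeeping on top of the distributional reversal, which is the conceptually crucial observation.
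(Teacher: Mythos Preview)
Your proof follows exactly the same core argument as the paper: the i.i.d.\ assumption forces the forward and backward iterates to be equal in distribution for each $n$, and almost sure convergence of the backward sequence upgrades to convergence in distribution, which then transfers to the forward sequence and identifies $\mu_{\theta^*}$ as the law of $\theta^*$. Your third step (verifying stationarity via the Feller property and a continuity argument for $\mathcal{P}$) actually goes beyond what the paper's own proof provides, since the appendix simply declares $\mu_{\theta^*} := dF_\infty^{\theta_0}$ to be the limiting distribution without separately checking invariance under the Markov kernel.
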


\section{Two explicit examples on stochastic gradient descent}\label{sec:examples}

We now present two examples that provide an intuition on why backward SGD converges to a point rather than to a distribution as forward SGD does, when the learning rate is fixed. The reason in both examples is that the stochastic noise added to the parameter iterates at each step due to the randomization of the batch diminishes to zero in backward SGD as the training progress even with fixed learning rates, while this stochastic noise stays unchanged at each step of forward SGD, requiring a learning rate decay for convergence. 

\subsection{Example: Quadratic loss}

Consider the quadratic loss function $L(\theta)=\theta^{2}/2$ with $\theta\in \mathbb R$. Its gradient is $\theta$. We want to find the minimum
of $L$ (which of course is $0$ at $\theta=0$) by a stochastic gradient
descent. To model the batch noise, we add an i.i.d random error $\epsilon_{i}$ to the gradient at step $i$, yielding the iteration procedure:
\[
    T_{i}(\theta)=\theta - h\Big(\nabla L(\theta) - \epsilon_i\Big) = (1-h)\theta+h \epsilon_{i},
\]
where $h$ is the learning rate. 
These maps are uniform contractions provided that $\theta\in (0,1)$ since
\[
|T_i(\theta_1) - T_i(\theta_2)| = |(1-h) (\theta_1 - \theta_2)| \leq |1 - h| |\theta_1 - \theta_2|.
\]
Now we would like to compare the behavior of the forward trajectory to that of the backward trajectory for the $T_i$'s. In order to do this we begin by calculating the forward iterates:
\begin{eqnarray*}
    T_{2}T_{1}(\theta) & = & (1-h)((1-h )\theta+h \epsilon_{1})+h \epsilon_{2}\\
    & = & (1-h)^2\theta + h \epsilon_{2} +(1-h)h\epsilon_{1}.
\end{eqnarray*}
One sees easily how this continues:
\[
    T_n\cdots  T_1(\theta)   =   (1 - h)^n \theta + h \epsilon_n  + \cdots + h (1 - h)^{n-1} \epsilon_1.
\]
By switching indices, for the backward iterates we obtain:
\[
    T_1\cdots  T_n(\theta)   =   (1 - h)^n \theta + h \epsilon_1  + \cdots + h (1 - h)^{n-1} \epsilon_n.
\]
We see that the forward iterates receive at each step $n$ an additional error $h\epsilon_n$ which stays constant during the whole trajectory preventing point-convergence, while for backward iterates that same error  $h (1 - h)^{n-1} \epsilon_n$ is decaying to zero as the trajectory progresses.

\subsection{Example: Learning dynamics close to minima}

The previous example was a warm up for the following more realistic example. However, the key idea is identical. Consider a loss function
\[
    L(\theta)=\frac{1}{N}\sum_{i=1}^{N}L_{i}(\theta),
\]
where $L_{i}$ is the loss computed on the random batch $B_i$. The SGD update is
\[
    T_{i}(\theta)=\theta-h \nabla L_i(\theta)
\]
where $h$ is again the learning rate. We now compare forward and backward SGD dynamics sufficiently close to a minimum  $\theta^*$ of the loss function so that we can approximate well the full loss gradient $g(\theta) = \nabla L(\theta)$ using the loss Hessian $H(\theta) = \nabla g(\theta)$: i.e., a first order Taylor's expansion of the gradient at the minimum yields
\begin{equation}\label{equation:gradient_approximation}
    g(\theta) \simeq g(\theta^*) + H(\theta^*) (\theta - \theta^*) = H(\theta^*)(\theta - \theta^*),
\end{equation}
since at a minimum $g(\theta^*)  = 0$.
To simplify the notation, we will denote by $g_i(\theta)$ the gradient of the batch loss $L_i(\theta)$, which is now a random vector (because the batch is random). We can model this stochastic gradient as follows: $g_i(x) = g(x) - \epsilon_i$, where we assume that the  differences $\epsilon_i = g(x) - g_i(x)$ are i.i.d random vectors with zero mean. Using gradient approximation \cref{equation:gradient_approximation} we have that a single gradient update is given in a form very similar to that of the previous example:
\begin{equation*}
    T_i(\theta)  
    =  \theta - h\Big(g(\theta) - \epsilon_i\Big)
    \simeq \theta^* + (1 - h H)(\theta -\theta^*) + h \epsilon_i
\end{equation*}
where $H$ denotes the Hessian evaluated at the minimum $\theta^*$. Since 
\begin{equation*}
\|T_i(\theta_1) - T_i(\theta_2)\| \leq \|1 - hH\| \|\theta_1 - \theta_2\|
\end{equation*}
and since $\|1 - hH\| < 1$ for $h$ strictly smaller than $2/\lambda_{\mathrm{max}}$ (as in \cref{example:full_batch_convergence}), \cref{lemma:contraction_principle} tells us that the backward trajectory converges. Now let us have a look at the form of the forward and backward iterates in this particular case to give us an intuition about why this is true. In a similar way than in the previous example, the $n^{th}$ forward SGD iterate is given by
\begin{equation*}
    T_n\cdots  T_1(\theta) = \theta^* + (1 - h H)^n (\theta - \theta^*)
     + h \epsilon_n + h (1 - h H) \epsilon_{n-1} + \cdots
    \cdots + h (1 - h H)^{n-1} \epsilon_1.
\end{equation*}
While  the term $(1 - h H)^n (\theta - \theta^*)$  converges to zero (when $h < 2/\lambda_{max}$), we see that at each step $n$ a fresh random vector $h \epsilon_n$ is added to the iterate, making the whole sequence converge to a distribution rather than to a point. On the other hand the backward iterate is obtained by reversing the indices:
\begin{equation*}
    T_1\cdots  T_n(\theta)  =  \theta^* + (1 - h H)^n (\theta - \theta^*)
     + h \epsilon_1 + h (1 - h H) \epsilon_{2} + \cdots
     \cdots + h (1 - h H)^{n-1} \epsilon_n.
\end{equation*}
This expression converges as $n\rightarrow \infty $ since the terms are decreasing at an exponential rate as $n$ grows.
Thus the iterates converge to a point now rather than a distribution. Looking at the actual form of the backward iterates, it is clear that they are distributed by the probability measure given by the forward iterates as prescribed by \cref{thm:backward_forward}. 

\section{Experiments}\label{sec:experiments}

{\bf Note on plotting multiple seeds:} We are interested in the variability per realization of the backward and forward trajectories. The backward trajectories are more stable individually along their own path, but these paths can be very different from seed to seed (because of the convergence toward different points). Therefore the phenomenon is much clearer when the seeds are plotted individually rather than averaged, which creates artificially more variability in the backward trajectories as there really is on each individual realization. This is why we are reporting the various seed plots individually and not as a single averaged plot with error bars. Note that the increased stability and point convergence is visible in each of the seeds, which are added to the \cref{appendix:multiple_seeds}.

\begin{figure}[ht] 
\centering
\includegraphics[width=0.8\textwidth]{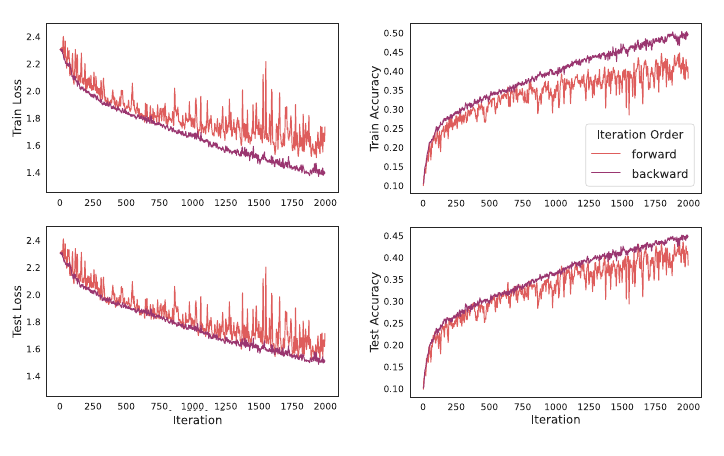}  
\caption{
Backward SGD exhibits decreased variance and increased stability compared to forward SGD for a ResNet-18 model trained on CIFAR-10. The additional seeds are in \cref{appendix:figure_1}. 
}
\label{figure:training_losses_cifar10_resnet18}
\end{figure}

\paragraph{Increased convergence stability for backward SGD:} 
We trained a ResNet-18 with stochastic gradient descent and no regularization on the CIFAR-10 dataset~\cite{krizhevsky2009learning}. We used a learning rate of 0.025 and a batch-size of 8. In \cref{figure:training_losses_cifar10_resnet18}, we recorded the learning curves at \emph{each} gradient update for both the forward and backward iterations for 2000 steps. The additional seeds are in \cref{appendix:figure_1}. 
We observe that the training loss for the backward trajectory is more stable, converges faster, and has less variability than the forward trajectory, and similarly for all the other learning curves.
In \cref{appendix:experiment_details}, we repeat the experiment with different datasets (synthetic dataset, FashionMNIST, and CIFAR-100) as well as different architectures (ResNet-50, VGG, and MLP) and using different base optimizers (AdamW). Each time we observe the same phenomenon for all 5 seeds.

\paragraph{Convergence toward points v.s.~distributions:} We trained a MLP with 5 layers of 500 neurons each with stochastic gradient descent with  no  regularization to learn FashionMNIST~\cite{xiao2017fashion}. The batch-size was set to 8 while the learning rate was 0.001. In \cref{figure:intermittent_backward_fashion_mnist}, we report the test accuracy at every single step for 2000 steps. At step 1000, we reset the point from which we perform backward SGD to that of the parameter at this step. As a result we see that the backward trajectory from that reset point seems to converge again, but to a different point (with different test accuracy), while forward SGD seems to oscillate between these two points. This is in line with the theoretical prediction that backward trajectories SGD converge toward points distributed according to the distribution induced by forward trajectories (see \cref{thm:backward_forward}). The additional seeds are in \cref{appendix:figure_2}.

\begin{figure}[ht] 
\centering
\includegraphics[width=0.85\textwidth]{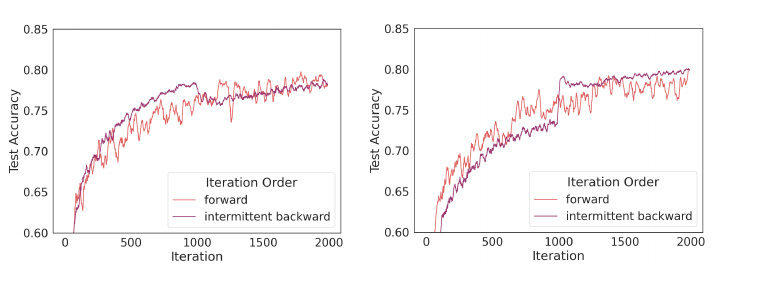}
\caption{
Backward SGD converges toward a different minimum after resetting the initialization point at step 1000 (``intermittent backward'') while forward SGD oscillates between them for MLP trained on FashionMNIST. {\bf Top:} On the first seed, backward changes from a higher test-performance trajectory to a lower test-performance trajectory at the reset step 1000. {\bf Bottom:} On the second seed, backward changes this time from a trajectory converging to a lower test-performance point to a trajectory converging to a higher test-performance point. The other seeds can be found in \cref{appendix:figure_2} including all the learning curves.
}
\label{figure:intermittent_backward_fashion_mnist}
\end{figure}

\section{Conclusion and future work}\label{sec:conclusion}

In this paper, we analyzed the impact of two different composition orders (to produce each iterate) on the convergence and stability of stochastic learning trajectories. Although the very idea of backward trajectories may seem  strange and counterintuitive at first, we show that this approach of backward optimization has clear advantages. Namely, we show theoretically that the backward trajectories converge toward actual points while the forward trajectories converge toward probability distributions in regions where the optimization maps become contractions, leading to improved stability and convergence.
We observed this phenomenon experimentally on a number of datasets (synthetic, FashionMNIST, CIFAR-10, and CIFAR-100) and neural network architectures (MLP, VGG-19, ResNet-18, and ResNet-50), making it relevant to deep learning.
This phenomenon points toward the fact that the ordering in which the data examples are consumed to produce each parameter iterate impacts the properties of both the learning trajectories and their convergence points. 
In particular the last examples used to produce a parameter iterate seem to have great importance. Indeed, the backward order is the only order in which the sequence of the last examples used to produce a parameter iterate remains consistent at each training step. As we saw, this yields convergence toward a point rather than oscillations between solutions of varied performances.

However, realistic applications of the full backward order are challenging due to the prohibitive computation time that grows quadratically with the number of batches consumed. We now point toward conjectural workarounds and applications to showcase possible exploitation of this phenomenon. This is outside of the scope of this paper; nevertheless, we give supporting evidence in the appendix when possible.

One possible strategy to mitigate the intensive computational requirements of the full backward order is to apply it only partially on a fixed window of iterations from a point that is reset periodically. This way the extra computation time is capped by the window size. We see that this approach leads to convergence and increased stability in \cref{figure:intermittent_backward_fashion_mnist}; however the backward trajectory may change and converge to a different point at each reset. We can imagine strategies to initiate this reset only when the performance of the new backward trajectory increases, leading to stable convergence to higher test performance points. Another possibility is to start the backward order only later in training to force convergence toward a single solution after the forward trajectory has done enough exploration of the space as demonstrated in \cref{figure:forward_backward} in \cref{appendix:experiment_details}. We imagine that this approach may possibly lead toward more reliable early-stopping criteria. Similarly, application of the backward order at the very beginning of training may give an accurate idea of whether a specific hyperparameter setting is promising or needs to be aborted by creating stabler trajectories whose learning curves are easier to interpret early on.

An orthogonal approach to deal with the computational challenges of backward optimization is to approximate the backward trajectory. In \cref{appendix:approximate_backward}, we show how to compute a backward iterate from a cheaper forward iterate to which we add a correction that can be computed completely independently. We compute these corrections up to order 2 in \cref{thm:approximate_backward} \cref{thm:approximate_backward}, namely:
\begin{equation*}
    T_1 \cdots T_n = T_n \cdots T_1 + h^2 \sum_{1\leq i<j\leq n} [\nabla L_i, \nabla L_j] + \mathcal O(h^3)
\end{equation*}
where $[\nabla L_i, \nabla L_j](\theta) = H_i(\theta)\nabla L_j(\theta) - H_j(\theta)\nabla L_i(\theta)$ is the Lie bracket between the vector fields $\nabla L_i$ and $\nabla L_j$.
Even though the approximation we provide is not immediately useful because the higher order terms seem to matter, it probably will be useful if higher order corrections are included. As a proof of the usefulness of these type of corrections added to the forward iterates to emulate alternate orderings, we show in \cref{appendix:implicit regularization} that the corrections we computed at second order are already enough to approximate an average between all the possible iteration orders on a batch window. Adding these corrections to the forward iterates produces a beneficial regularizer that mimics small batch training regularization. In fact, a training step can be seen as a parameter average of mixture of models where each model is trained not from a different seed but from a different batch order. We hope that this paper will bring awareness and foster more research toward understanding and exploiting the role of iteration order in the production of models with more consistent properties along more stable learning trajectories.

\section*{Broader Impact Statement}

This work focused on the theoretical aspect of learning algorithms, especially stability. We do not foresee any negative impact from this theoretical work. 

\section*{Acknowledgments}
The third-listed author was supported by the Swiss NSF Grants 200020-200400 and 200021-212864, and by the Swedish Research Council Grant 104651320. We are grateful to Massimo Horvath for several insightful comments on an earlier version of the manuscript. We would like to thank Michael Wunder, Peter Bartlett, George Dahl, Atish
Agarwala, and Scott Yak for helpful discussion and feedback.

\bibliography{backward}

@book{nesterov,
  title={Introductory lectures on convex optimization: A basic course},
  author={Nesterov, Yurii},
  volume={87},
  year={2013},
  publisher={Springer Science \& Business Media}
}

@book{vandervaart,
  title={Asymptotic statistics},
  author={Van der Vaart, Aad W},
  volume={3},
  year={2000},
  publisher={Cambridge university press}
}

@article{AK22,
  author = {Avelin, B. and Karlsson, A.},
  title = {Deep limits and cut-off phenomena for neural networks},
  journal = {J. Mach. Learn. Res.},
  volume = {23},
  number = {191},
  pages = {29},
  year = {2022}
}

@article{BFGQ19,
  author = {Bergomi, Mattia G. and Frosini, Patrizio and Giorgi, Daniela and Quercioli, Nicola},
  title = {Towards a topological–geometrical theory of group equivariant non-expansive operators for data analysis and machine learning},
  journal = {Nature Machine Intelligence},
  volume = {1},
  pages = {423--433},
  year = {2019}
}

@inproceedings{QW19,
  author = {Qian, H. and Wegman, M.N.},
  title = {{L2}-nonexpansive neural networks},
  booktitle = {International Conference on Learning Representations (ICLR)},
  year = {2019}
}

@article{gemini2023,
  author = {Gemini, Team},
  title = {Gemini: A Family of Highly Capable Multimodal Models},
  journal = {arXiv preprint arXiv:2312.11805},
  year = {2023}
}

@article{gemini152024,
  author = {Gemini, T. Team},
  title = {Gemini 1.5: Unlocking multimodal understanding across millions of tokens of context},
  journal = {arXiv preprint arXiv:2403.05530},
  year = {2024}
}

@article{brown2020language,
  title={Language Models are Few-Shot Learners},
  author={Brown, Tom B. and Mann, Benjamin and Ryder, Nick and Subbiah, Melanie and Kaplan, Jared and Dhariwal, Prafulla and Neelakantan, Arvind and Shyam, Pranav and Sastry, Girish and Askell, Amanda and others},
  journal={Advances in Neural Information Processing Systems},
  volume={33},
  pages={1877--1901},
  year={2020}
}

@article{ramesh2022hierarchical,
  title={Hierarchical Text-Conditional Image Generation with CLIP Latents},
  author={Ramesh, Aditya and Dhariwal, Prafulla and Nichol, Alex and Chu, Casey and Chen, Mark},
  journal={arXiv preprint arXiv:2204.06125},
  year={2022}
}

@article{ramesh2021zeroshot,
  title={Zero-Shot Text-to-Image Generation},
  author={Ramesh, Aditya and Pavlov, Mikhail and Goh, Gabriel and Gray, Scott and Voss, Chelsea and Radford, Alec and Chen, Mark and Sutskever, Ilya},
  journal={arXiv preprint arXiv:2102.12092},
  year={2021}
}

@article{saharia2022photorealistic,
  title={Photorealistic Text-to-Image Diffusion Models with Deep Language Understanding},
  author={Saharia, Chitwan and Chan, William and Saxena, Saurabh and Li, Lala and Whang, Jay and Denton, Emily and Ghasemipour, Seyed Kamyar and Ayan, Burcu Karagol and Mahdavi, S Sara and Baldridge, Joshua and others},
  journal={Advances in Neural Information Processing Systems},
  volume={35},
  pages={37117--37130},
  year={2022}
}

@article{jumper2021highly,
  title={Highly accurate protein structure prediction with AlphaFold},
  author={Jumper, John and Evans, Richard and Pritzel, Alexander and Green, Tim and Figurnov, Michael and Ronneberger, Olaf and Tunyasuvunakool, Kathryn and Bates, Russ and {\v{Z}}{\'\i}dek, Augustin and Potapenko, Anna and others},
  journal={Nature},
  volume={596},
  number={7873},
  pages={583--589},
  year={2021},
  publisher={Nature Publishing Group}
}

@article{jumper2021highlyaccurate,
  title={Highly accurate protein structure prediction with AlphaFold},
  author={Jumper, John and Evans, Richard and Pritzel, Alexander and Green, Tim and Figurnov, Michael and Ronneberger, Olaf and Tunyasuvunakool, Kathryn and Bates, Russ and {\v{Z}}{\'\i}dek, Augustin and Potapenko, Anna and others},
  journal={Nature},
  volume={596},
  number={7873},
  pages={583--589},
  year={2021},
  publisher={Nature Publishing Group}
}

@article{wang2023exploring,
  title={Exploring Gradient Oscillation in Deep Neural Network Training},
  author={Chedi Morchdi and Yi Zhou and Jie Ding and Bei Wang},
  journal={arXiv preprint arXiv:2306.00554},
  year={2023}
}

@article{li2019variance,
  title={On the Variance of the Adaptive Learning Rate and Beyond},
  author={Li, Xiang and Huang, Wenhai and Li, Zheng and Zhou, Lei},
  journal={arXiv preprint arXiv:1908.03265},
  year={2019}
}

@article{chen2018stability,
  title={Stability and Convergence Trade-off of Iterative Optimization Algorithms},
  author={Chen, Yuansi and Jin, Chi and Yu, Bin},
  journal={arXiv preprint arXiv:1804.01619},
  year={2018}
}

@article{keskar2017large,
  title={On large-batch training for deep learning: Generalization gap and sharp minima},
  author={Keskar, Nitish Shirish and Mudigere, Dheevatsa and Nocedal, Jorge and Smelyanskiy, Mikhail and Tang, Ping Tak Peter},
  journal={arXiv preprint arXiv:1609.04836},
  year={2017}
}

@inproceedings{
dherin2022why,
title={Why neural networks find simple solutions:  The many regularizers of geometric complexity},
author={Benoit Dherin and Michael Munn and Mihaela Rosca and David GT Barrett},
booktitle={NeurIPS},
year={2022}
}

@inproceedings{smith2021on,
title={On the Origin of Implicit Regularization in Stochastic Gradient Descent},
author={Samuel L Smith and Benoit Dherin and David G.T. Barrett and Soham De},
booktitle={ICLR},
year={2021}
}

@article{smith2017don,
  title={Don't Decay the Learning Rate, Increase the Batch Size},
  author={Smith, Samuel L and Kindermans, Pieter-Jan and Ying, Chris and Le, Quoc V},
  journal={arXiv preprint arXiv:1711.00489},
  year={2017}
}

@article{xiao2017fashion,
  title={Fashion-MNIST: a Novel Image Dataset for Benchmarking Machine Learning Algorithms},
  author={Xiao, Han and Rasul, Kashif and Vollgraf, Roland},
  journal={arXiv preprint arXiv:1708.07747},
  year={2017}
}

@techreport{PageBrin98,
  author = {Page, Lawrence and Brin, Sergey and Motwani, Rajeev and Winograd, Terry},
  title = {The PageRank Citation Ranking: Bringing Order to the Web},
  year = {1998},
  month = {January},
  institution = {Stanford InfoLab},
  number = {1999-66},
  url = {http://ilpubs.stanford.edu:8090/422/}
}

@article{diaconis1999iterated,
author = {Diaconis, Persi and Freedman, David},
title = {Iterated Random Functions},
journal = {SIAM Review},
volume = {41},
number = {1},
pages = {45-76},
year = {1999},
doi = {10.1137/S0036144598338446},
URL = { https://doi.org/10.1137/S0036144598338446},
}

@misc{shirokoff2024convergencemarkovchainsconstant,
      title={Convergence of Markov Chains for Constant Step-size Stochastic Gradient Descent with Separable Functions}, 
      author={David Shirokoff and Philip Zaleski},
      year={2024},
      eprint={2409.12243},
      url={https://arxiv.org/abs/2409.12243}, 
}

@article{merad2023convergence,
  title = {Convergence and concentration properties of constant step-size SGD through Markov chains},
  author = {Merad, Ibrahim and Gaïffas, Stéphane},
  journal = {arXiv preprint arXiv:2306.11497},
  year = {2023}
}

@article{dieuleveut2020bridging,
author = "Dieuleveut, Aymeric and Durmus, Alain and Bach, Francis",
journal = "Ann. Statist.",
month = "06",
number = "3",
pages = "1348--1382",
title = "Bridging the gap between constant step size stochastic gradient descent and Markov chains",
volume = "48",
year = "2020"
}

@misc{babichev2018constantstepsizestochastic,
      title={Constant Step Size Stochastic Gradient Descent for Probabilistic Modeling}, 
      author={Dmitry Babichev and Francis Bach},
      year={2018},
      eprint={1804.05567},
      url={https://arxiv.org/abs/1804.05567}, 
}

@inproceedings{mertikopoulos2020on,
 author = {Mertikopoulos, Panayotis and Hallak, Nadav and Kavis, Ali and Cevher, Volkan},
 booktitle = {Advances in Neural Information Processing Systems},
 title = {On the Almost Sure Convergence of Stochastic Gradient Descent in Non-Convex Problems},
 year = {2020}
}

@article{propp1996exact,
author = {Propp, James Gary and Wilson, David Bruce},
title = {Exact sampling with coupled Markov chains and applications to statistical mechanics},
journal = {Random Structures \& Algorithms},
volume = {9},
number = {1-2},
pages = {223-252},
year = {1996}
}

@article{dieuleveut2016nonparametric,
author = {Aymeric Dieuleveut and Francis Bach},
title = {{Nonparametric stochastic approximation with large step-sizes}},
volume = {44},
journal = {The Annals of Statistics},
number = {4},
publisher = {Institute of Mathematical Statistics},
pages = {1363 -- 1399},
year = {2016},
}

@book{meyn_tweedie_1993,
    title = {Markov Chains and Stochastic Stability},
    author = {Sean P. Meyn and Richard L. Tweedie},
    series = {Communications and Control Engineering},
    publisher = {Springer London},
    year = {1993}
}

@article{robbins1951a_stochastic,
author = {Herbert Robbins and Sutton Monro},
title = {{A Stochastic Approximation Method}},
volume = {22},
journal = {The Annals of Mathematical Statistics},
number = {3},
pages = {400 -- 407},
year = {1951},
}

@InProceedings{cheng2020stochastic,
  title = 	 {Stochastic Gradient and Langevin Processes},
  author =       {Cheng, Xiang and Yin, Dong and Bartlett, Peter and Jordan, Michael},
  booktitle = 	 {Proceedings of the 37th International Conference on Machine Learning},
  pages = 	 {1810--1819},
  year = 	 {2020},
  volume = 	 {119},
  series = 	 {Proceedings of Machine Learning Research},
}

@inproceedings{
cohen2021gradient,
title={Gradient Descent on Neural Networks Typically Occurs at the Edge of Stability},
author={Jeremy Cohen and Simran Kaur and Yuanzhi Li and J Zico Kolter and Ameet Talwalkar},
booktitle={International Conference on Learning Representations},
year={2021},
url={https://openreview.net/forum?id=jh-rTtvkGeM}
}

@inproceedings{
cai2024large_stepsize,
title={Large Stepsize Gradient Descent for Non-Homogeneous Two-Layer Networks: Margin Improvement and Fast Optimization},
author={Yuhang Cai and Jingfeng Wu and Song Mei and Michael Lindsey and Peter Bartlett},
booktitle={The Thirty-eighth Annual Conference on Neural Information Processing Systems},
year={2024},
url={https://openreview.net/forum?id=chLoLUHnai}
}

@InProceedings{Wu2024large_stepsize,
  title = 	 {Large Stepsize Gradient Descent for Logistic Loss: Non-Monotonicity of the Loss Improves Optimization Efficiency},
  author =       {Wu, Jingfeng and Bartlett, Peter L. and Telgarsky, Matus and Yu, Bin},
  booktitle = 	 {Proceedings of Thirty Seventh Conference on Learning Theory},
  year = 	 {2024},
  volume = 	 {247},
  series = 	 {Proceedings of Machine Learning Research},
}

@misc{li2023implicitstochasticgradientdescent,
      title={Implicit Stochastic Gradient Descent for Training Physics-informed Neural Networks}, 
      author={Ye Li and Song-Can Chen and Sheng-Jun Huang},
      year={2023},
      eprint={2303.01767},
      archivePrefix={arXiv},
      primaryClass={cs.LG},
      url={https://arxiv.org/abs/2303.01767}, 
}

@article{Wang2020UnderstandingAM,
  title={Understanding and mitigating gradient pathologies in physics-informed neural networks},
  author={Sifan Wang and Yujun Teng and Paris Perdikaris},
  journal={ArXiv},
  year={2020},
  volume={abs/2001.04536},
  url={https://api.semanticscholar.org/CorpusID:210473546}
}

@mastersthesis{krizhevsky2009learning,
  title={Learning multiple layers of features from tiny images},
  author={Krizhevsky, Alex},
  year={2009},
  school={University of Toronto}
}

@inproceedings{
huang2017snapshot,
title={Snapshot Ensembles: Train 1, Get M for Free},
author={Gao Huang and Yixuan Li and Geoff Pleiss and Zhuang Liu and John E. Hopcroft and Kilian Q. Weinberger},
booktitle={International Conference on Learning Representations},
year={2017},
url={https://openreview.net/forum?id=BJYwwY9ll}
}

@misc{soviany2022curriculumlearningsurvey,
      title={Curriculum Learning: A Survey}, 
      author={Petru Soviany and Radu Tudor Ionescu and Paolo Rota and Nicu Sebe},
      year={2022},
      eprint={2101.10382},
      archivePrefix={arXiv},
      primaryClass={cs.LG},
      url={https://arxiv.org/abs/2101.10382}, 
}

@INPROCEEDINGS{mange19dataorder,
  author={Mange, Jeremy},
  booktitle={2019 International Conference on Computational Science and Computational Intelligence (CSCI)}, 
  title={Effect of Training Data Order for Machine Learning}, 
  year={2019},
  volume={},
  number={},
  pages={406-407},
  keywords={Machine learning algorithms;Training data;Machine learning;Task analysis;Decision trees;Forestry;Prediction algorithms;Machine Learning;Classification;Training Data},
  doi={10.1109/CSCI49370.2019.00078}}

@inproceedings{
lange2023continual,
title={Continual evaluation for lifelong learning: Identifying the stability gap},
author={Matthias De Lange and Gido M van de Ven and Tinne Tuytelaars},
booktitle={ICLR},
year={2023}
}

@article{simonyan2014very,
  title={Very deep convolutional networks for large-scale image recognition},
  author={Simonyan, Karen and Zisserman, Andrew},
  journal={arXiv preprint arXiv:1409.1556},
  year={2014}
}

@inproceedings{he2016deep,
  title={Deep residual learning for image recognition},
  author={He, Kaiming and Zhang, Xiangyu and Ren, Shaoqing and Sun, Jian},
  booktitle={Proceedings of the IEEE conference on computer vision and pattern recognition},
  pages={770--778},
  year={2016}
}

@article{merchant2023scaling,
  title={Scaling deep learning for materials discovery},
  author={Merchant, Amil and Batzner, Simon and Schoenholz, Samuel S and Aykol, Muratahan and Cheon, Gowoon and Cubuk, Ekin Dogus},
  journal={Nature},
  volume={624},
  number={7990},
  pages={80--85},
  year={2023},
  publisher={Nature Publishing Group UK London}
}

@inproceedings{lee2022trajectory,
  title={Trajectory of Mini-Batch Momentum: Batch Size Saturation and Convergence in High Dimensions},
  author={Lee, Kiwon and Cheng, Andrew N. and Paquette, Courtney and Paquette, Elliot},
  booktitle={Advances in Neural Information Processing Systems},
  year={2022},
  volume={35},
  pages={22437-22449}
}

@article{Agarwala2024High,
  title={High dimensional analysis reveals conservative sharpening and a stochastic edge of stability},
  author={Agarwala, Atish and Pennington, Jeffrey},
  journal={arXiv preprint arXiv:2404.19261},
  year={2024},
  month={Apr},
  day={30}
}

@inproceedings{Novack2023Disentangling,
  title={Disentangling the Mechanisms Behind Implicit Regularization in SGD},
  author={Novack, Zachary and Kaur, Simran and Marwah, Tanya and Garg, Saurabh and Lipton, Zachary},
  booktitle={International Conference on Learning Representations},
  year={2023}
}

@inproceedings{ali2020implicit,
  title={The Implicit Regularization of Stochastic Gradient Flow for Least Squares},
  author={Ali, Aamir and Dobriban, Edgar and Tibshirani, Ryan},
  booktitle={Proceedings of the 37th International Conference on Machine Learning},
  volume={119},
  pages={233--244},
  year={2020},
  organization={PMLR}
}

@article{Choi2019FasterNN,
title={Faster Neural Network Training with Data Echoing},
author={Dami Choi and Alexandre Passos and Christopher J. Shallue and George E. Dahl},
journal={arXiv preprint arXiv:1907.05550},
year={2019},
url={https://arxiv.org/abs/1907.05550},
}

@article {Liu2022,
    AUTHOR = {Liu, Chaoyue and Zhu, Libin and Belkin, Mikhail},
     TITLE = {Loss landscapes and optimization in over-parameterized
              non-linear systems and neural networks},
   JOURNAL = {Appl. Comput. Harmon. Anal.},
    VOLUME = {59},
      YEAR = {2022},
     PAGES = {85--116},
}

@article {Bassily2018,
    AUTHOR = {Bassily, Raef and Belkin, Mikhail and Ma, Siyuan},
    TITLE = {On exponential convergence of SGD in non-convex over-parametrized learning},
    journal={arXiv preprint arXiv:1811.02564},
    year={2018}
}

@article{loshchilov2017decoupled,
  title={Decoupled weight decay regularization},
  author={Loshchilov, Ilya and Hutter, Frank},
  journal={arXiv preprint arXiv:1711.05101},
  year={2017}
}
\bibliographystyle{tmlr}

\newpage
\appendix
\onecolumn

\section{Additional Experiments}\label{appendix:experiment_details}

{\bf Note on plotting multiple seeds:} We are interested in the variability per realization of the backward and forward trajectories. The backward trajectories are more stable individually along their own paths, but these paths can be very different from seed to seed (because of the convergence toward different points). Therefore, the phenomenon is much clearer when the seeds are plotted individually rather than averaged, which creates artificially more variability in the backward trajectories as there really is on each individual realization. This is why we are reporting the various seed plots individually and not as a single averaged plot with error bars. Note that the increased stability and point convergence is visible in each of the seeds.

\newpage

\subsection{Regression on synthetic datasets}

In this experiment, to verify the increased stability of backward SGD over the standard forward version, we trained with both algorithms a neural network with 3 layers of 300 neurons with no regularization each to regress 100 points uniformly sampled from function graphs  
$D = \{(x_i, y_i):\:  y_i = f(x_i) \;\textrm{with}\; x_i = -1 + \frac {2i}{100},\, i=0,\dots, 100\}$
using the functions $f(x) = x^2$ (\cref{figure:exp1}), $f(x) = \cos(10x)$ (\cref{figure:exp2}), and $f(x) = x^3$ (\cref{figure:exp3}).  In all cases,  we can observe a much higher stability for backward SGD over the forward version. 

\begin{figure}[ht] 
\centering
\includegraphics[width=0.4\textwidth]{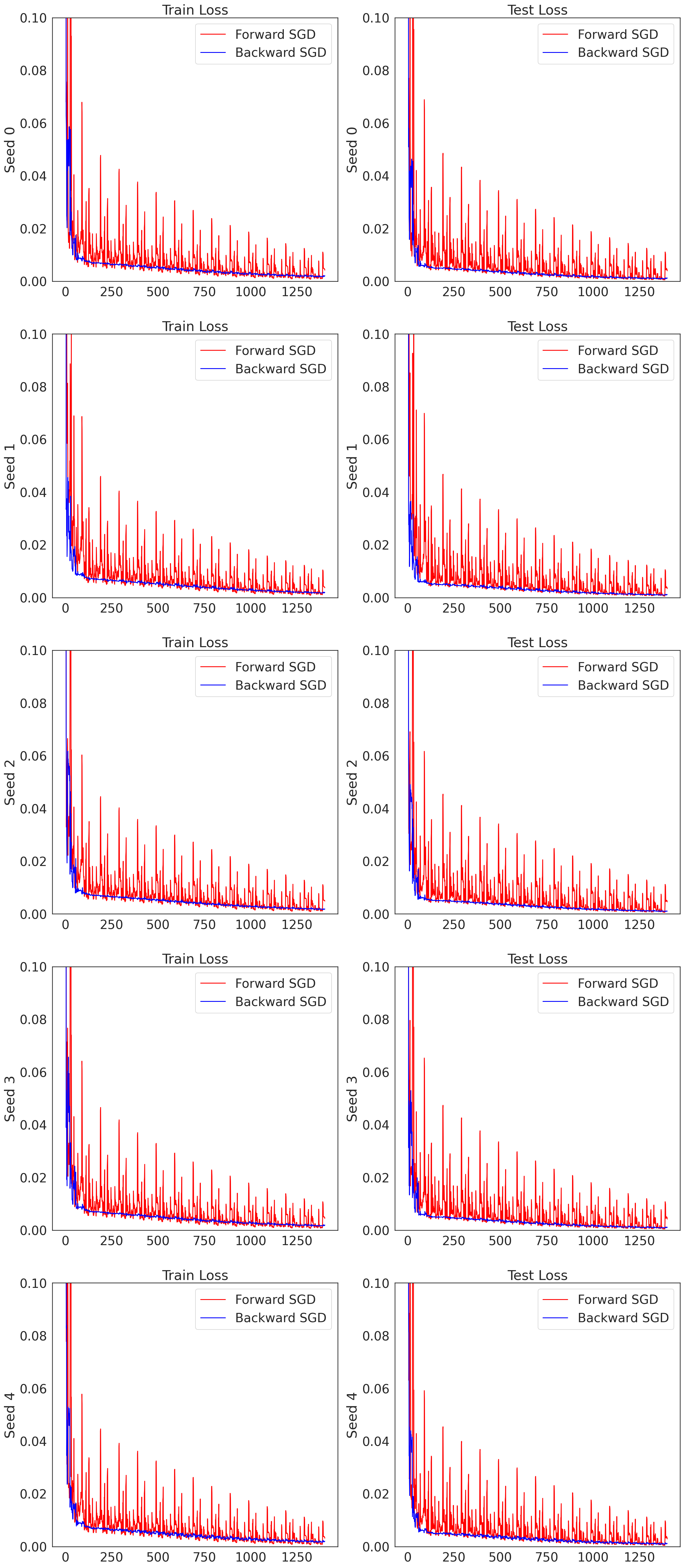}
\caption{
Decreased variance and increased stability in train (left) and test (right) losses for backward SGD compared to forward SGD for all 5 seeds. The data was sampled from $f(x) = x^2$ and the training performed with batch size 1 and learning rate 0.05 for 1400 steps.
}
\label{figure:exp1}
\end{figure}

\newpage

\begin{figure}[ht] 
\centering
\includegraphics[width=0.4\textwidth]{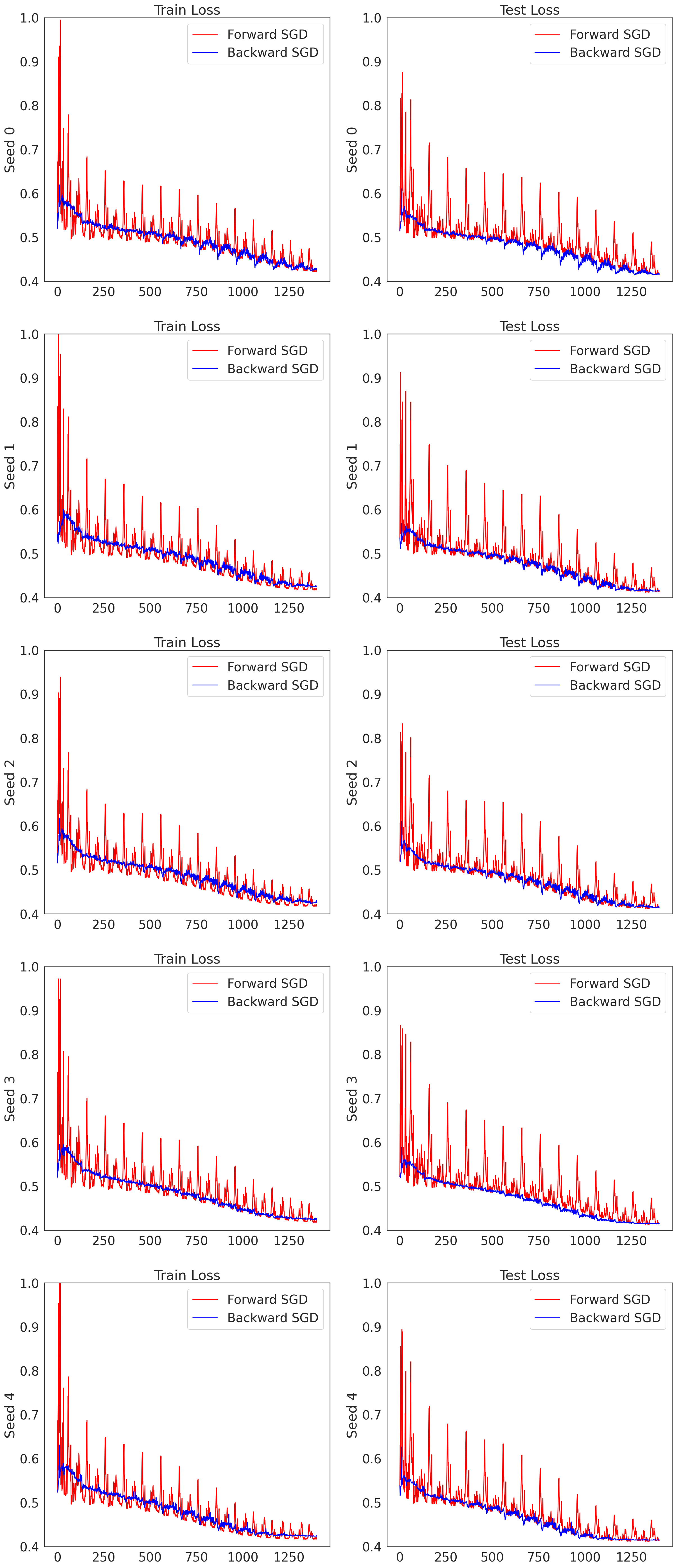}
\caption{
Decreased variance and increased stability in train (left) and test (right) losses for backward SGD compared to forward SGD for all 5 seeds.
The data was sampled from $f(x) = \cos(10x)$ and the training performed with batch size 1 and learning rate 0.02 for 1400 steps.
}
\label{figure:exp2}
\end{figure}

\newpage

\begin{figure}[ht] 
\centering
\includegraphics[width=0.4\textwidth]{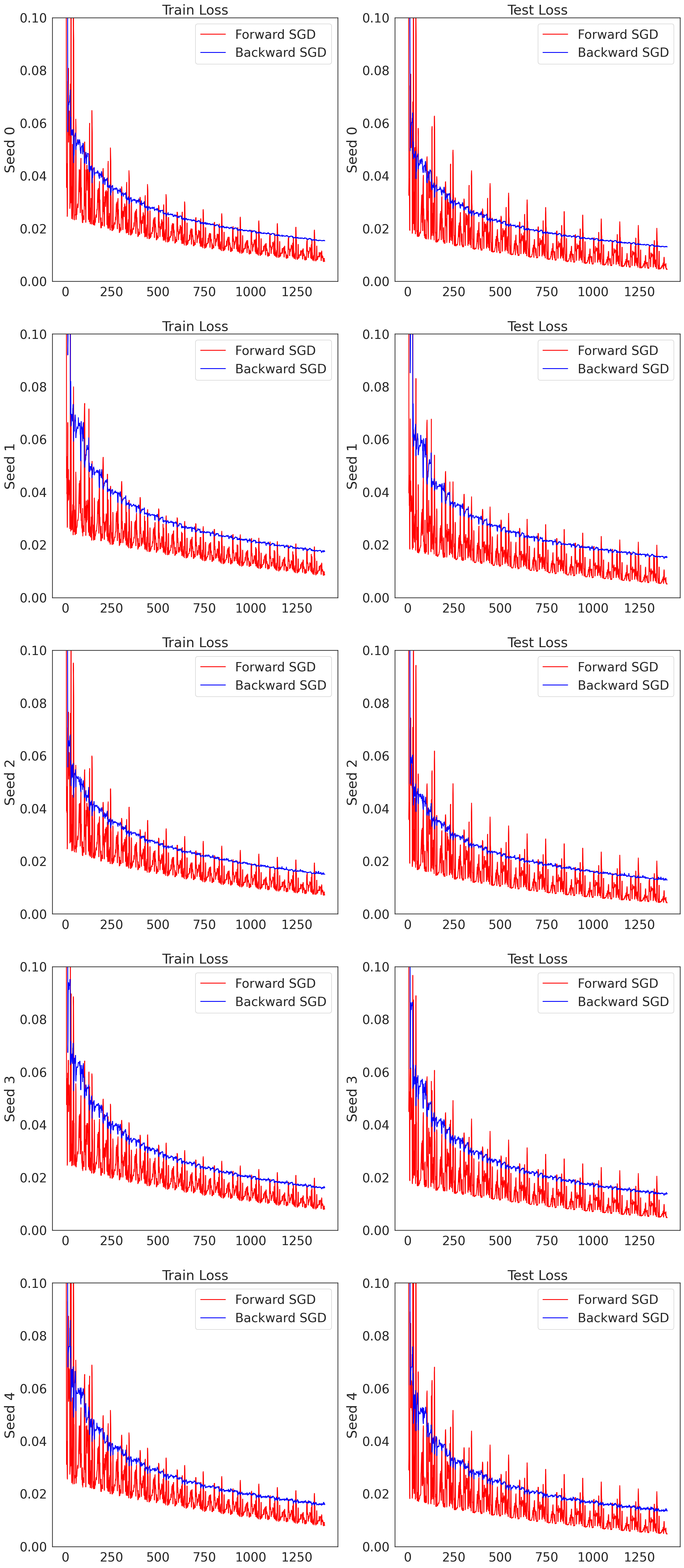}
\caption{
Decreased variance and increased stability in train (left) and test (right) losses for backward SGD compared to forward SGD for all 5 seeds. The data was sampled from $f(x) = x^3$ and the training performed with batch size 1 and learning rate 0.02 for 1400 steps.
}
\label{figure:exp3}
\end{figure}

\newpage

\subsection{MLP trained on  Fashion-MNIST}

In this experiment, to verify the increased stability of backward SGD over the standard forward version, we trained a MLP with 5 layers of 500 neurons each with no regularization using both forward and backward stochastic gradient descent with no regularization on the Fashion-MNIST dataset~\cite{xiao2017fashion}. We repeated the experiment for 5 different random seeds. For all seeds, we used a learning rate of 0.001 and a batch size of 8. In \cref{figure:training_losses_fashion_mnist}, we recorded the training loss at each gradient update for both the forward end backward iteration and plotted the 5 seeds separately (one seed for each row). For each seed, we observe that the training loss for the backward iterations is more stable and converges faster with less variability when compared to the forward iterations. We also observe stability improvements for all other  learning curves for the backward trajectories.

\begin{figure}[ht] 
\centering
\includegraphics[width=0.7\textwidth]{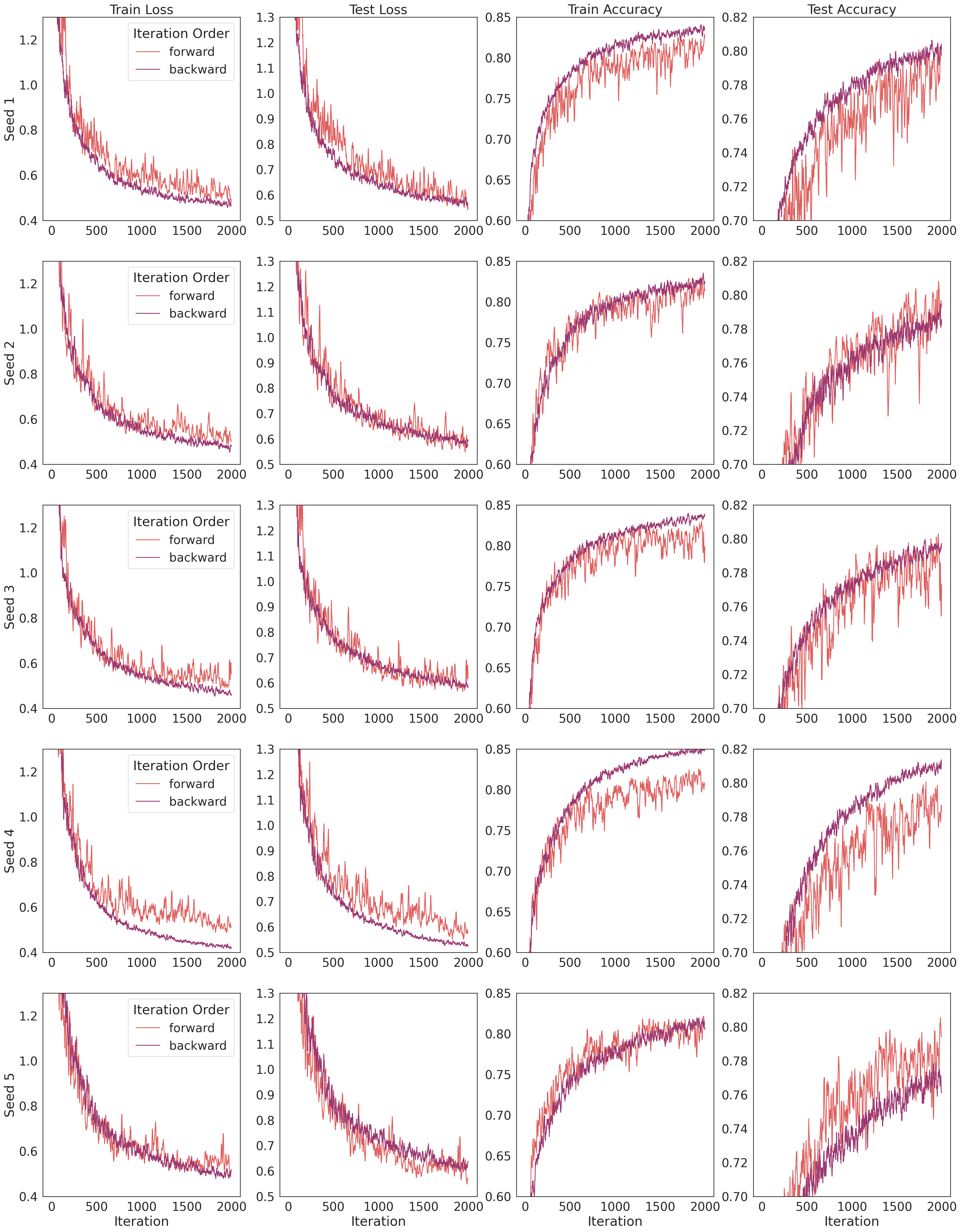}
\caption{
MLP trained on Fashion-MNIST. Training with backward SGD significantly reduces the variance and increases the stability of both the train and test loss as well as the train and test and accuracy compared to forward SGD. This behavior is consistent across all seeds.}
\label{figure:training_losses_fashion_mnist}
\end{figure}

\newpage

\subsection{VGG-19 trained on CIFAR-10}

In this experiment, to verify the increased stability of backward SGD over the standard forward version, we trained a VGG-19 model~\cite{simonyan2014very} using both forward and backward stochastic gradient descent with no regularization on the CIFAR-10 dataset~\cite{krizhevsky2009learning}. We repeated the experiment for 5 different random seeds. In all seeds, we used a learning rate of 0.001 and a batch-size of 8. In \cref{figure:training_losses_cifar10_vgg19}, we plot the training and test loss at each gradient update for both the forward and backward iteration. We plotted the 5 seeds separately, represented by each row. Note that for each seed, we observe that the training loss for the backward iterations is again more stable, converges faster, and has less variability than the forward iterations. 

\begin{figure}[ht] 
\centering
\includegraphics[width=0.7\textwidth]{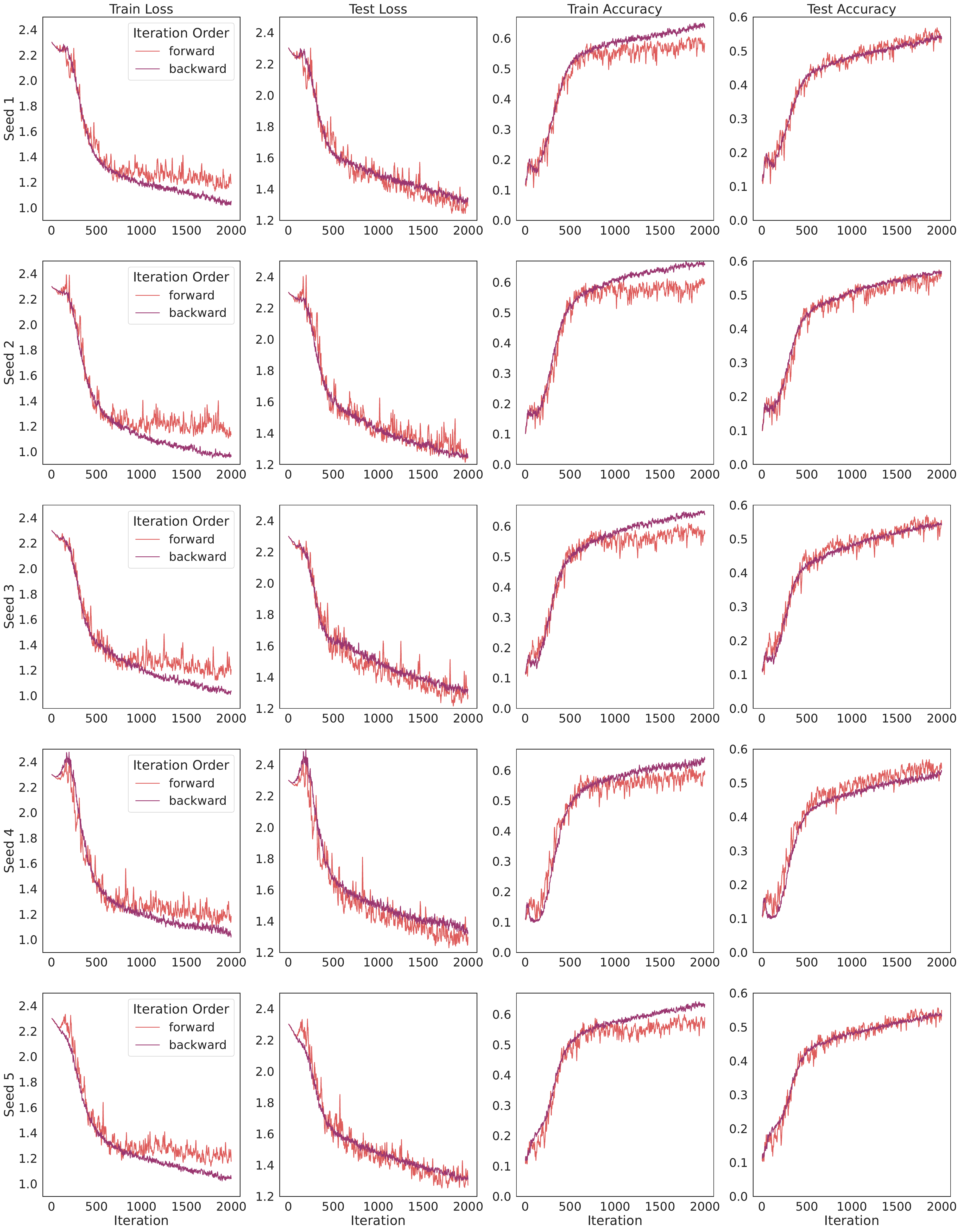}
\caption{
VGG-19 trained on CIFAR-10. Training with backward SGD significantly reduces the variance and increases the stability of both the train and test loss as well as the train and test and accuracy compared to forward SGD. This behavior is consistent across all seeds.}
\label{figure:training_losses_cifar10_vgg19}
\end{figure}

\newpage

\subsection{ResNet-50 trained on CIFAR-100}

In this experiment, to verify the increased stability of backward SGD over the standard forward version, we trained a ResNet-50 model~\cite{he2016deep} using both forward and backward stochastic gradient descent with no regularization on the CIFAR-100 dataset~\cite{krizhevsky2009learning}. We repeated the experiment for 5 different random seeds. In all seeds, we used a learning rate of 0.001 and a batch-size of 16. In \cref{figure:training_losses_cifar100_resnet50}, we plot the training and test loss at each gradient update for both the forward and backward iteration. We plotted the 5 seeds separately, represented by each row. Note that for each seed, we observe that the training loss for the backward iterations is again more stable, converges faster, and has less variability than the forward iterations. 

\begin{figure}[ht] 
\centering
\includegraphics[width=0.7\textwidth]{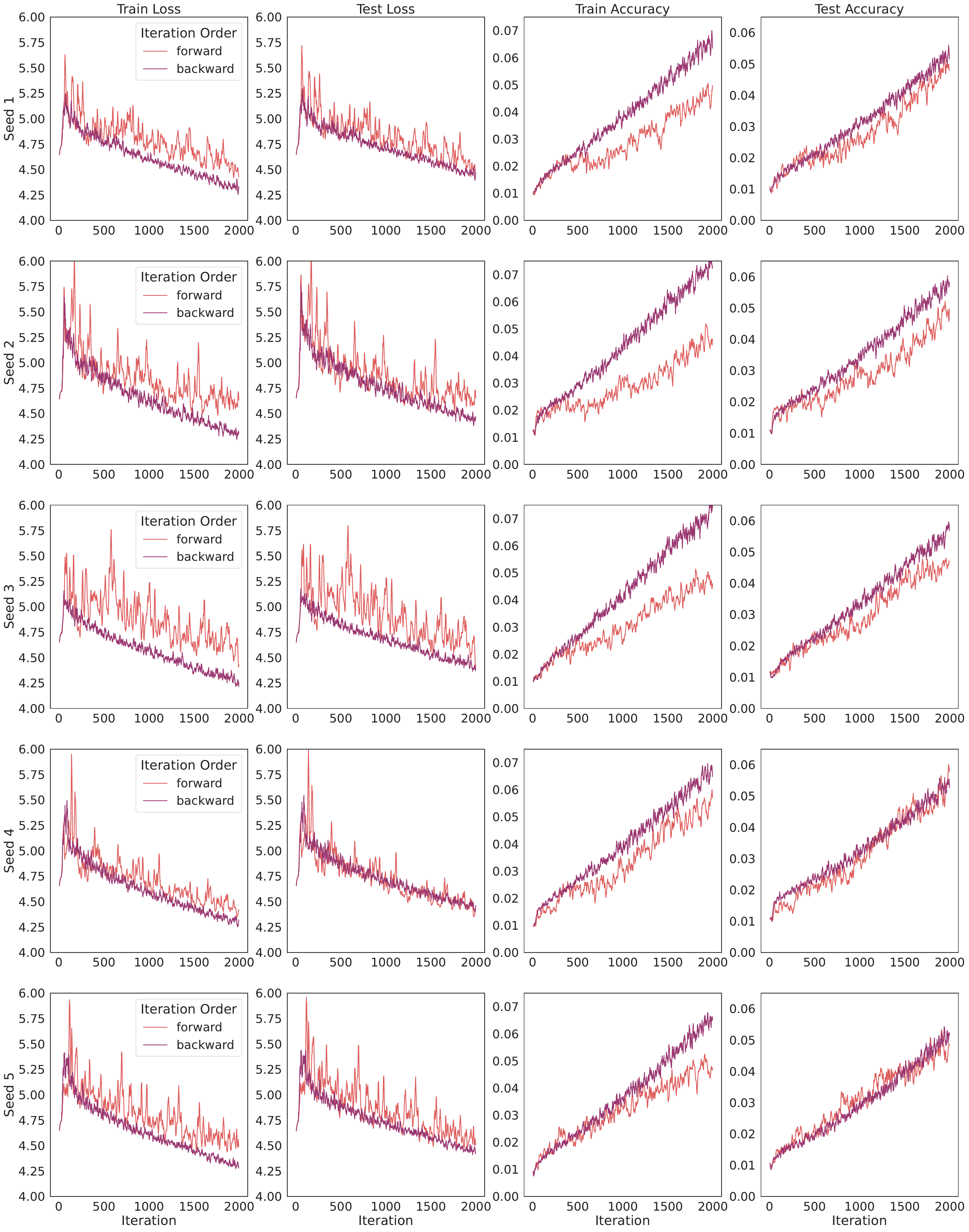}
\caption{
ResNet-50 trained on CIFAR-100. Training with backward SGD significantly reduces the variance and increases the stability of both the train and test loss as well as the train and test and accuracy compared to forward SGD. This behavior is consistent across all seeds.}
\label{figure:training_losses_cifar100_resnet50}
\end{figure}

\newpage

\subsection{ResNet-18 trained on CIFAR-10 with AdamW}
\label{additiona_experiments:AdamW}

In this experiment we verify improved stability of backward iterates for a base optimizer other than vanilla SGD. AdamW \cite{loshchilov2017decoupled} combines momentum from Adam with weight decay regularization and improves upon the standard Adam by preventing the adaptive learning rates from distorting the intended regularization strength.

Here, we use AdamW for both forward and backward gradient updates, only altering the order in which batches are consumed. We train a ResNet-18 model~\cite{he2016deep} and do not apply any regularization on the CIFAR-10 dataset~\cite{krizhevsky2009learning}. We repeated the experiment for 5 different random seeds. In all seeds, we used a learning rate of 0.00025 and a batch-size of 8. In \cref{figure:training_losses_cifar10_resnet18_adamW}, we plot the training and test loss at each gradient update for both the forward and backward iteration. We plotted the 5 seeds separately, represented by each row. Note that for each seed, we observe that the training loss for the backward iterations is again more stable, converges faster, and has less variability than the forward iterations. 

\begin{figure}[ht] 
\centering
\includegraphics[width=0.7\textwidth]{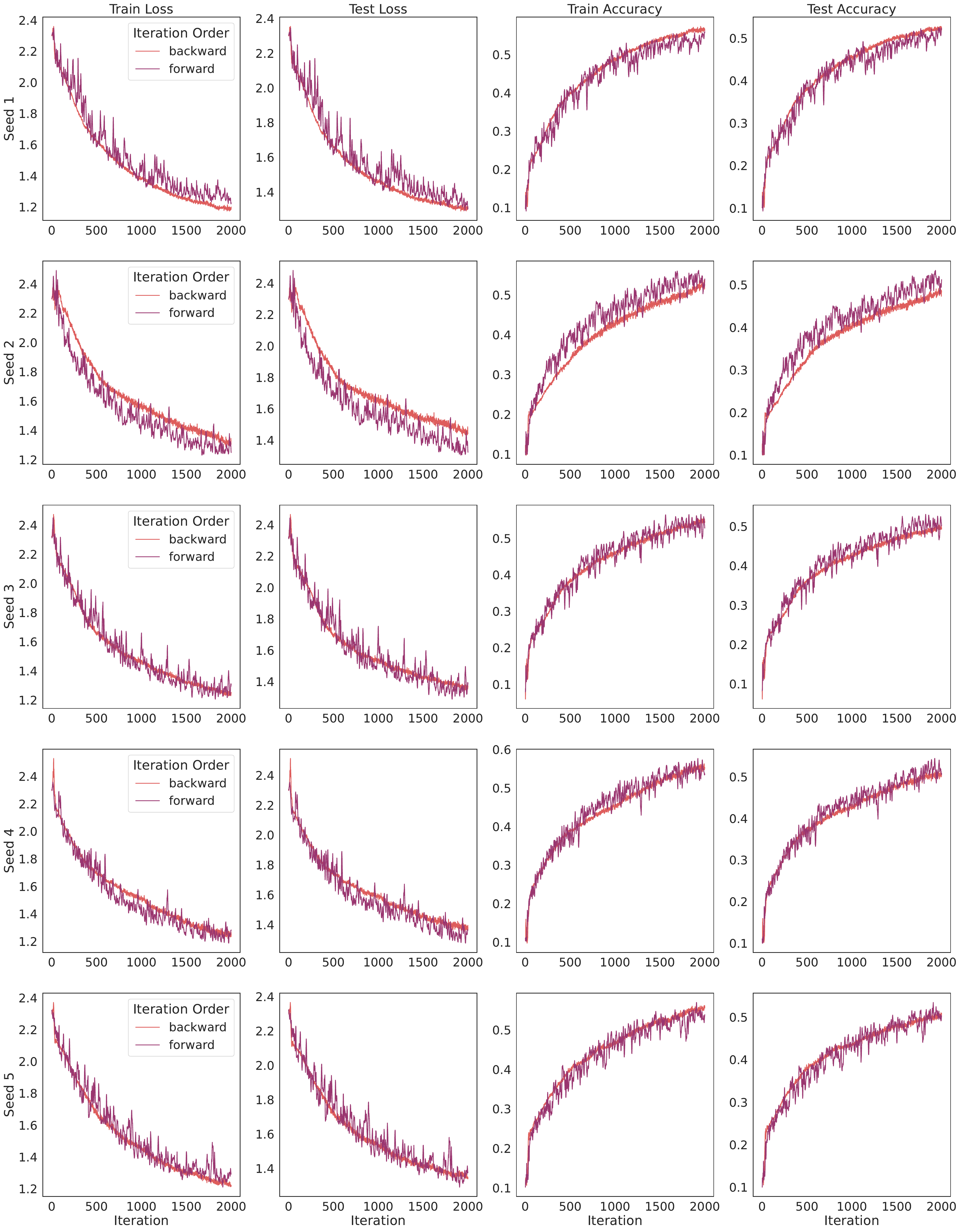}
\caption{
ResNet-18 trained on CIFAR-10 with AdamW. Training with backward AdamW significantly reduces the variance and increases the stability of both the train and test loss as well as the train and test and accuracy compared to forward AdamW. This behavior is consistent across all seeds.}
\label{figure:training_losses_cifar10_resnet18_adamW}
\end{figure}

\newpage

\subsection{Backward stabilization after forward iterations}
\label{additional_experiment:backward_after_forward}

In this experiment, to verify the increased stability and convergence after turning on backward iterations after a number of forward iterations, we trained a MLP with 5 layers of 500 neurons each with stochastic gradient descent with no regularization to learn the 10 classes of Fashion-MNIST dataset~\cite{xiao2017fashion}. We repeated the experiments for 5 seeds. We used a learning rate of 0.001 and a batch-size of 8. In \cref{figure:forward_backward}, we recorded the learning curves at each gradient update for both the forward iteration and the backward iteration switched on after step 1000. Each seed is plotted independently. 
We observe that the training loss for the backward iterations has again more stable convergence once backward iterations are switched on after step 1000 than the forward iterations. 

\begin{figure}[ht] 
\centering
\includegraphics[width=0.7\textwidth]{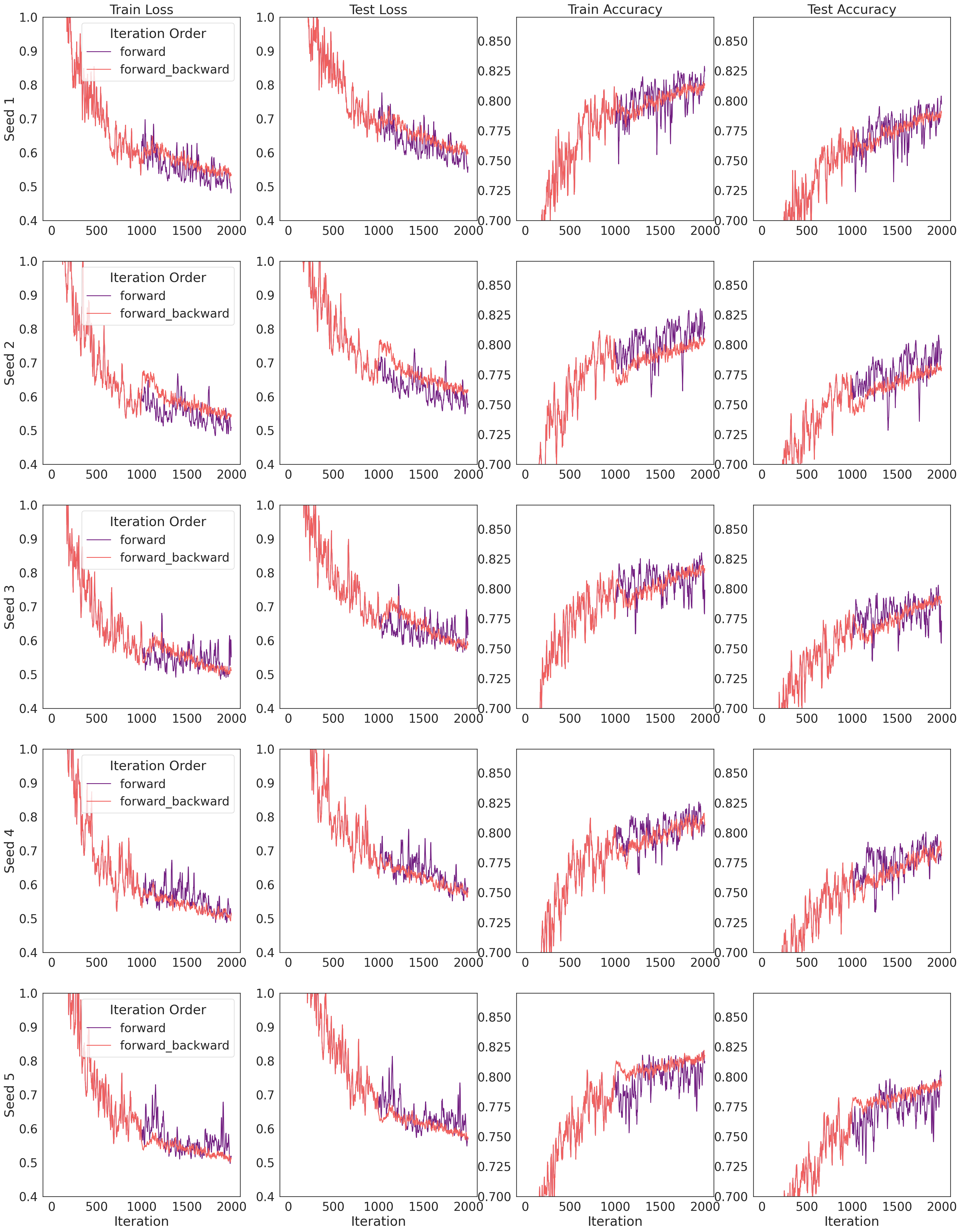}
\caption{
Decreased variance and increased stability for all learning curves and all seeds once backward SGD is switched on at step 1000 after forward SGD iterations for an MLP trained on Fashion MNIST.}
\label{figure:forward_backward}
\end{figure}

\newpage

\subsection{Continued stability of backward SGD throughout training.}
\label{additional_experiment:backward_continued_stability}

In this experiment, to verify that the behavior of increased stability and convergence for backward iterates continues throughout training, we trained a ResNet-18 model on the CIFAR-10 dataset using the same hyperparameters as in \cref{figure:training_losses_cifar10_resnet18} (i.e., no regularization, a learning rate of 0.025 and batch size of 8) but training for 25000 steps instead of 2000. In \cref{figure:training_losses_cifar10_resnet18_trainingtoconvergence}, we record the learning curves for both forward and backward SGD, performing model evaluation every 100 steps. Throughout training, we continue to notice that the training loss (as well as all the other learning curves) for the backward iterations is again more stable, converges faster, and has less variability than the forward iterations. Note, here we only performed this experiment for one seed because because of the high computational requirements for backward SGD with so many training steps. 

\begin{figure}[ht] 
\centering
\includegraphics[width=0.7\textwidth]{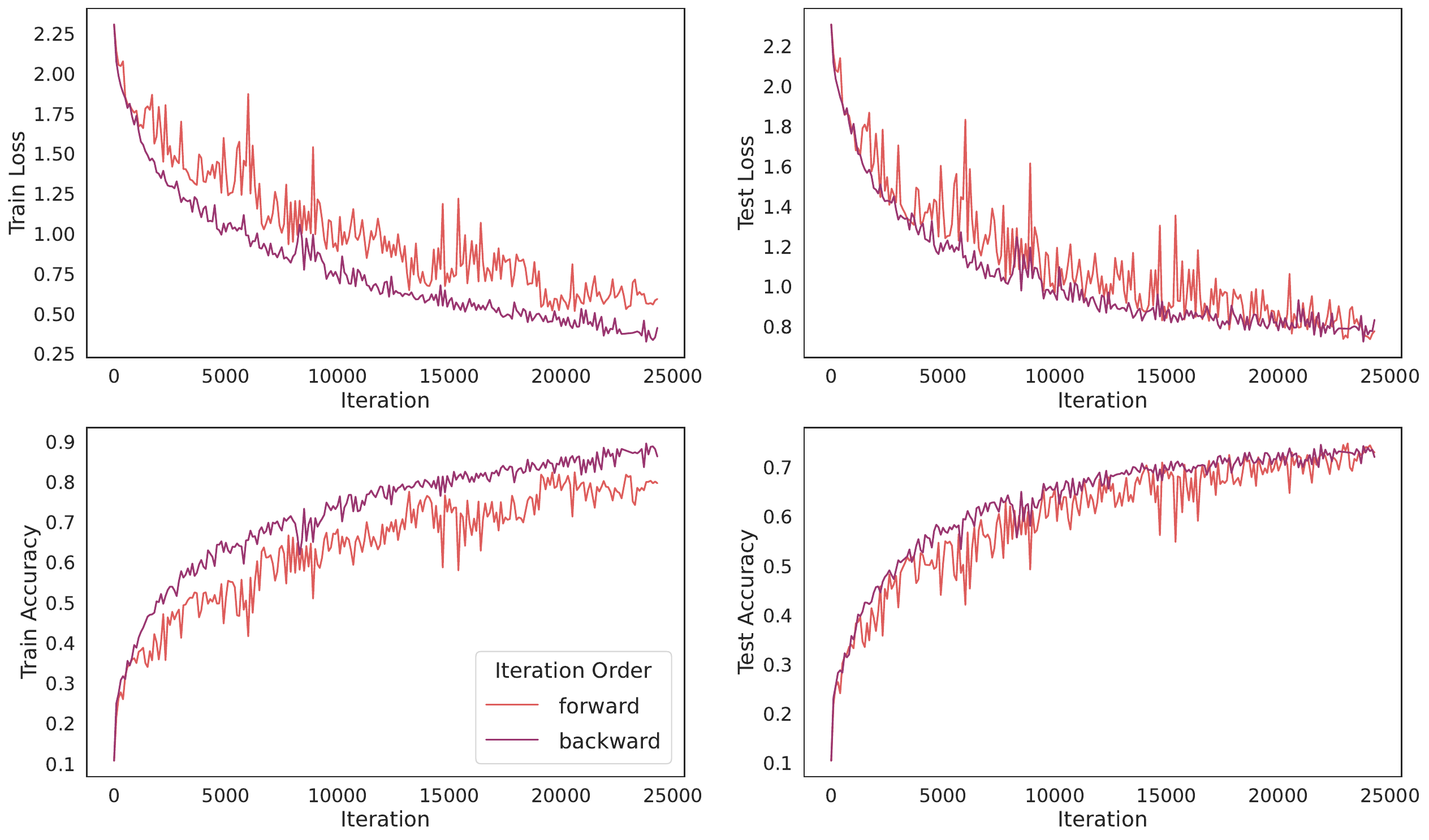}
\caption{
ResNet-18 trained on CIFAR-10 for 25000 steps. Decreased variance and increased stability throughout model training.}
\label{figure:training_losses_cifar10_resnet18_trainingtoconvergence}
\end{figure}

\newpage

\section{Additional seeds to plots in the main paper}\label{appendix:multiple_seeds}

{\bf Note on plotting multiple seeds:} We are interested in the variability per realization of the backward and forward trajectories. The backward trajectories are more stable individually along their own paths, but these paths can be very different from seed to seed (because of the convergence toward different points). Therefore the phenomenon is much clearer when the seeds are plotted individually rather than averaged, which creates artificially more variability in the backward trajectories as there really is on each individual realization. This is why we are reporting the various seed plots individually and not as a single averaged plot with error bars. Note that the increased stability and point convergence is visible in each of the seeds.

\subsection{All 5-seeds for ResNet-18 trained on CIFAR-10; cf.~\cref{figure:training_losses_cifar10_resnet18}}\label{appendix:figure_1}

In \cref{figure:training_losses_cifar10} below, we plot the learning curves for all five seeds as in \cref{figure:training_losses_cifar10_resnet18} of the main paper.

\begin{figure}[ht] 
\centering
\includegraphics[width=0.7\textwidth]{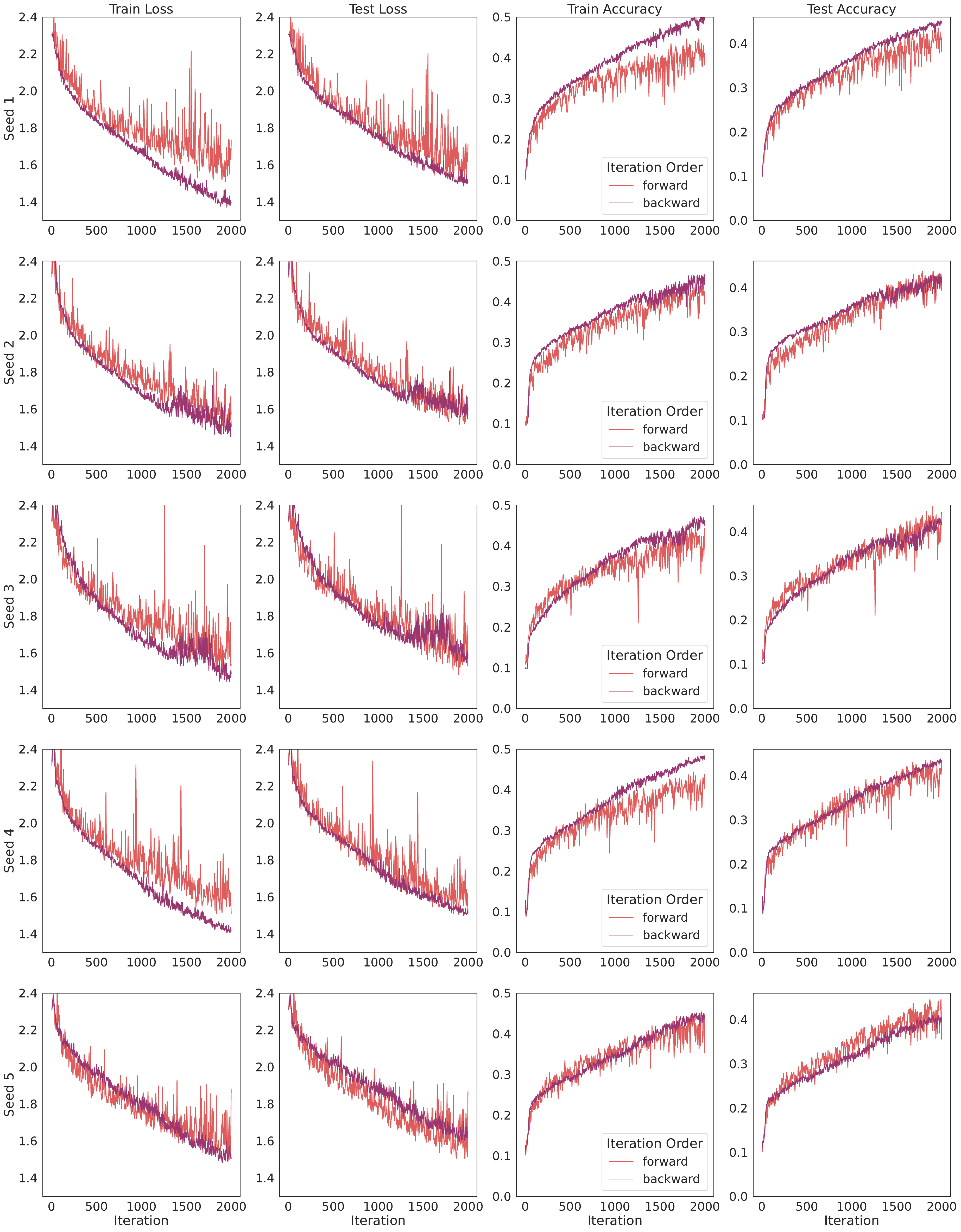}
\caption{
All 5-seeds plot of \cref{figure:training_losses_cifar10_resnet18}.}
\label{figure:training_losses_cifar10}
\end{figure}

\newpage

\subsection{All 5-seeds for MLP trained on FashionMNIST; cf.~\cref{figure:intermittent_backward_fashion_mnist}}\label{appendix:figure_2}

In \cref{figure:intermittent_fashion_mnist_5_seeds} below, we plot  the learning curves for all five seeds as in \cref{figure:intermittent_backward_fashion_mnist} of the main paper.

\begin{figure}[ht] 
\centering
\includegraphics[width=0.7\textwidth]{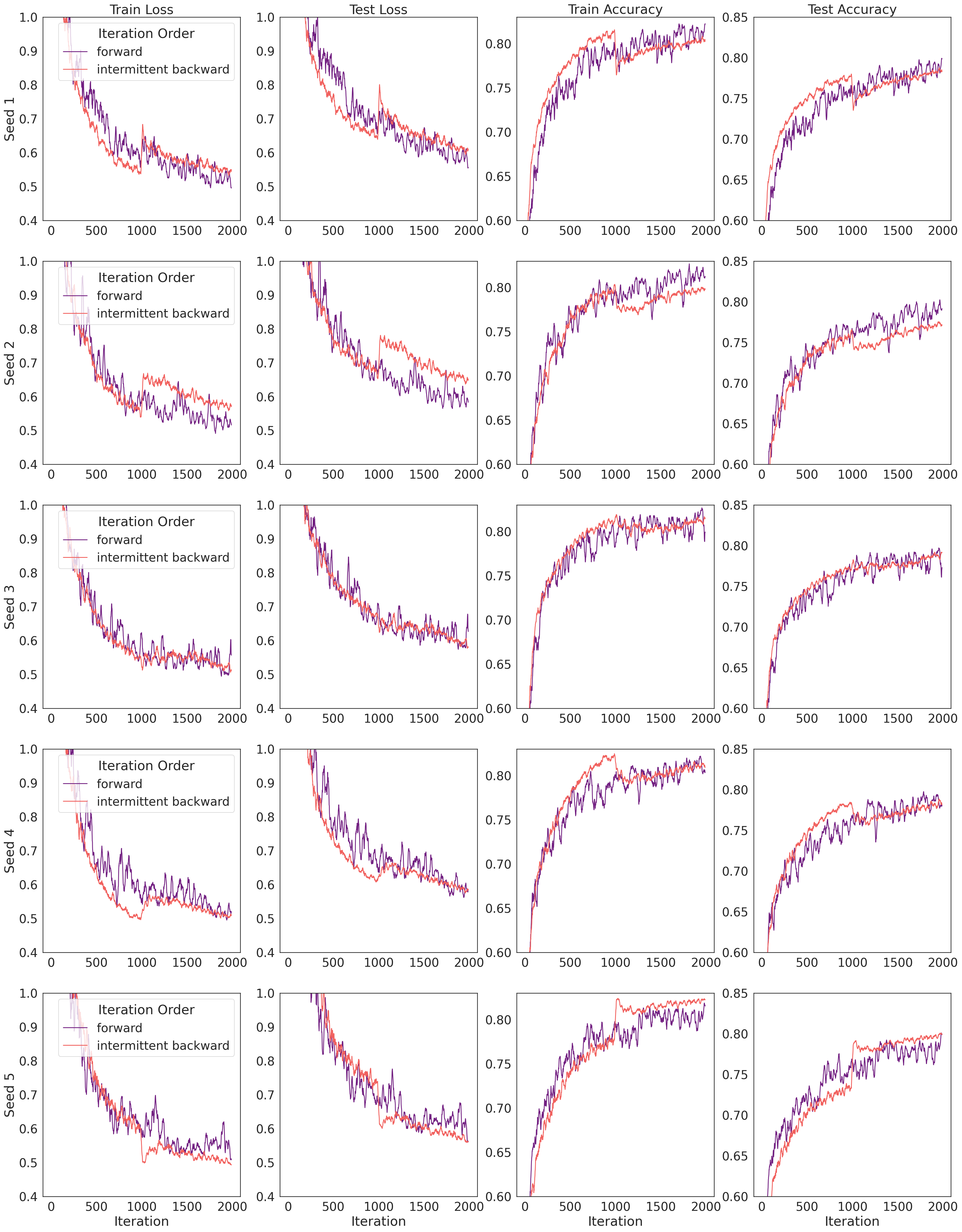}
\caption{
All 5-seeds plot of \cref{figure:intermittent_backward_fashion_mnist}}
\label{figure:intermittent_fashion_mnist_5_seeds}
\end{figure}

\newpage 

\section{Toward an approximate backward SGD}\label{appendix:approximate_backward}

Backward SGD has increased stability and convergence over forward SGD but naive implementations are computationally intensive. In this section, we propose an approximation of the backward iterates by modifying the forward iterates with a Lie bracket term of order $\mathcal O(h^2)$ in the learning rate (\cref{thm:approximate_backward}). This comes at an extra cost, but which can be made smaller than evaluating the backward iterates from the initial point at every step (\cref{corollary:approximation}). 
Our approximation is valid 
\begin{itemize}
\item for any optimizer of the form $T_i(\theta) = \theta + h V_i(\theta)$, where $V_i(\theta)$ is a random vector field on the parameter space depending on the randomly sampled batch of data $B_i$. (Of course, in the case of SGD we have $V_i(\theta) = -\nabla L_i(\theta)$ where $L_i = L_{B_i}$ is the loss function evaluated on batch $B_i$.)
\item up to an error of order $\mathcal O(h^3)$  in the learning rate.
\end{itemize}

Unfortunately, it seems that in real-life cases higher orders beyond $\mathcal O(h^2)$ play a significant role in the backward dynamics, and therefore can not be neglected for reasonable ranges of the learning rate. Nevertheless, we include our second order approximation here, since we believe it gives a reasonable path on how to produce approximations of the backward trajectories using the forward trajectories. Namely, while this is out-of-scope for this paper and topic of further research, we believe that adding higher order corrections may produce useful approximations that could be used as stabilizers for the forward trajectories. 
The main idea is to expand a generic $k$ term iterate 
\begin{equation}
    T_{i_1}  \cdots T_{i_k}(\theta) = (1 + hV_{i_1})\cdots (1 + hV_{i_k})(\theta)
\end{equation}
in Taylor's series in the learning rate. This is what the next lemma gives. (We demonstrate the usefulness of such expansions  \cref{appendix:implicit regularization} where we use \cref{lemma:commutation} to produce a beneficial second order regularizer that emulates an iteration order average.) 

Observe also that \cref{lemma:commutation} tells us that all possible iteration orders coincide at order $h$ but start differing at order $\mathcal O(h^2)$:

\begin{lemma} \label{lemma:commutation}
Consider a sequence $\{T_i\}_{i > 0}$ of operators of the form $T_i(\theta) = \theta + h V_i(\theta)$, where $V_i(\theta)$ is a vector field on the parameter space. Then for any choice of indices $i_1, \dots, i_k$ we have that
\begin{equation*}
    T_{i_1}  \cdots T_{i_k}=  
    1
    + h \sum_{l=1}^k V_{i_l}
    + h^2 \sum_{1\leq u < v \leq k} V_{i_u}' V_{i_v}
    + \mathcal O(h^3)
\end{equation*}
 
\end{lemma}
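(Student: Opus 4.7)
The plan is to proceed by induction on $k$, using Taylor expansion of each $V_{i_l}$ at intermediate iterates. For the base case $k=1$ we have $T_{i_1}(\theta) = \theta + hV_{i_1}(\theta)$ exactly, and the double sum over $1 \leq u < v \leq 1$ is empty, matching the claimed formula.

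For the inductive step, I would assume the expansion holds for $k-1$, so that
\begin{equation*}
    (T_{i_2}\cdots T_{i_k})(\theta) = \theta + h W(\theta) + h^2 S(\theta) + \mathcal{O}(h^3),
\end{equation*}
where $W(\theta) = \sum_{l=2}^k V_{i_l}(\theta)$ and $S(\theta) = \sum_{2 \leq u < v \leq k} V_{i_u}'(\theta) V_{i_v}(\theta)$. Then I would apply $T_{i_1}$ on the left and Taylor expand $V_{i_1}$ about $\theta$, obtaining
\begin{equation*}
    V_{i_1}\bigl(\theta + hW(\theta) + h^2 S(\theta) + \mathcal{O}(h^3)\bigr) = V_{i_1}(\theta) + h V_{i_1}'(\theta) W(\theta) + \mathcal{O}(h^2).
\end{equation*}
The higher order Taylor terms (those involving $V_{i_1}''$ applied to $(hW)^{\otimes 2}$, or $V_{i_1}'$ applied to $h^2 S$) are already $\mathcal{O}(h^2)$ inside $V_{i_1}$, hence $\mathcal{O}(h^3)$ after multiplication by $h$ in front.

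Collecting terms then gives
\begin{equation*}
    T_{i_1}\cdots T_{i_k}(\theta) = \theta + h\Bigl(V_{i_1}(\theta) + W(\theta)\Bigr) + h^2\Bigl(S(\theta) + V_{i_1}'(\theta) W(\theta)\Bigr) + \mathcal{O}(h^3).
\end{equation*}
The first parenthesis is $\sum_{l=1}^k V_{i_l}(\theta)$. The $h^2$ piece combines $\sum_{2 \leq u < v \leq k} V_{i_u}'V_{i_v}$ with $\sum_{v=2}^k V_{i_1}'V_{i_v}$, which is precisely the full sum $\sum_{1 \leq u < v \leq k} V_{i_u}'V_{i_v}$, completing the induction.

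The only real care-point is verifying that all cross terms hidden in the remainders are genuinely $\mathcal{O}(h^3)$; this is a routine check that the second-order Taylor remainder for $V_{i_1}$, the propagation of the $h^3$ remainder in the inductive hypothesis, and the product $h \cdot \mathcal{O}(h^2)$ all contribute only at order $h^3$ or higher, assuming the vector fields $V_i$ are at least $C^2$ with locally bounded derivatives. No hard analytic obstacle arises; the whole proof is a controlled bookkeeping exercise on Taylor expansions.
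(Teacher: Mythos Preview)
Your proof is correct and follows essentially the same approach as the paper: induction on $k$, applying $T_{i_1}$ to the $(k-1)$-fold composition and Taylor-expanding $V_{i_1}$ about $\theta$, then collecting the $h$ and $h^2$ terms. The only cosmetic difference is that you introduce the shorthand $W,S$ for the inductive sums and spell out more explicitly why the remainder is $\mathcal{O}(h^3)$.
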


\begin{proof}
    We proceed by induction. For the base case, $k=1$, this is trivial. Suppose now that this is true for any composition of $k-1$ operators. By definition of $T_{i_1}$ we have that
    \begin{equation}
        \begin{aligned}
            T_{i_1}\cdots T_{i_k}(\theta) 
            & = T_{i_1}(T_{i_2}\cdots T_{i_k}(\theta)) \\
            & = X + h V_{i_1}(X) 
        \end{aligned}
        \label{eq:X}
    \end{equation}
    with $X = T_{i_2}\cdots T_{i_k}(\theta)$. Now by induction hypothesis we have that
    \begin{equation}
        X = \theta + h\sum_{l=2}^{k} V_{i_l}(\theta) + h^2 \sum_{2\leq u < v \leq k} V_{i_u}'(\theta) V_{i_v}(\theta) + \mathcal O(h^3)
    \end{equation}
    Therefore, taking a Taylor series for the second term of \cref{eq:X}, we obtain
    \begin{equation}
    hV_{i_1}(X) = hV_{i_1}(\theta) + h^2 \sum_{l=2}^k V_{i_1}'(\theta) V_{i_l}(\theta) +\mathcal O(h^3).
    \end{equation}
    Summing up in \cref{eq:X} the expressions we have found for $X$ and $hV_{i_1}(X)$ above, we obtain that the composition $T_{i_1}\cdots T_{i_k}(\theta)$ has the form
    \begin{equation*}
    \theta + h\sum_{l=1}^{k} V_{i_l}(\theta) + h^2 \sum_{1\leq u < v \leq k} V_{i_u}'(\theta) V_{i_v}(\theta) + \mathcal O(h^3),
    \end{equation*}
    which completes the proof.
\end{proof}

The next theorem gives us a way to approximate the backward iterates up to order $\mathcal O(h^3)$ by correcting the forward iterates with a term keeping track of their difference:

\begin{theorem}\label{thm:approximate_backward}
Consider a sequence $\{T_i\}_{i > 0}$ of operators of the form $T_i(\theta) = \theta + h V_i(\theta)$, where $V_i(\theta)$ is a vector field on the parameter space. The backward and forward iterates of the sequence are related by the following identity:
\begin{equation*}
    T_1 \cdots T_n(\theta) = T_n \cdots T_1(\theta) + h^2 \sum_{1\leq i<j\leq n} [V_i, V_j](\theta) + \mathcal O(h^3)
\end{equation*}
where the $[V_i, V_j](\theta) = V_i'(\theta)V_j(\theta) - V_j'(\theta)V_i(\theta)$ is the Lie bracket between the vector fields $V_i$ and $V_j$.
\end{theorem}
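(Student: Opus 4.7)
My plan is to apply Lemma \ref{lemma:commutation} separately to the backward composition $T_1 T_2 \cdots T_n$ and to the forward composition $T_n T_{n-1} \cdots T_1$, and then subtract. Since the lemma gives an expansion that is symmetric in $h$ up to order $\mathcal O(h^3)$, only the order-$h^2$ terms can produce a nonzero difference, and the claim is that this difference collects into a sum of Lie brackets.

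Concretely, for the backward composition I would use Lemma \ref{lemma:commutation} with indices $i_l = l$, obtaining
\begin{equation*}
    T_1 T_2 \cdots T_n = 1 + h \sum_{l=1}^n V_l + h^2 \sum_{1\leq u < v \leq n} V'_u V_v + \mathcal O(h^3).
\end{equation*}
For the forward composition I would apply the same lemma with the reversed index assignment $i_l = n - l + 1$, getting
\begin{equation*}
    T_n T_{n-1} \cdots T_1 = 1 + h \sum_{l=1}^n V_{n-l+1} + h^2 \sum_{1\leq u < v \leq n} V'_{n-u+1} V_{n-v+1} + \mathcal O(h^3).
\end{equation*}
The linear terms coincide because both are $h\sum_l V_l$, so no contribution arises at first order.

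The second-order bookkeeping is the only step with any substance, and it is short. Substituting $a = n-u+1$ and $b = n-v+1$ in the forward quadratic term turns the constraint $u < v$ into $a > b$, so that term equals $\sum_{1 \leq j < i \leq n} V'_i V_j = \sum_{1 \leq i < j \leq n} V'_j V_i$. Subtracting the forward expansion from the backward expansion therefore yields
\begin{equation*}
    T_1 \cdots T_n - T_n \cdots T_1 = h^2 \sum_{1\leq i<j\leq n} \bigl(V'_i V_j - V'_j V_i\bigr) + \mathcal O(h^3) = h^2 \sum_{1\leq i<j\leq n} [V_i,V_j] + \mathcal O(h^3),
\end{equation*}
which is exactly the identity to be proved.

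The only conceptual obstacle here is verifying that the $\mathcal O(h^3)$ remainders in the two applications of Lemma \ref{lemma:commutation} can indeed be absorbed into a single $\mathcal O(h^3)$ on a common neighborhood of $\theta$; this is routine under the implicit smoothness assumption on the vector fields $V_i$ (uniform bounds on $V_i$ and their derivatives over the compact region traversed by the iterates), so the argument reduces to the index manipulation above together with one invocation of the previous lemma.
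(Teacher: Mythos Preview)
Your proposal is correct and follows essentially the same approach as the paper's own proof: apply Lemma~\ref{lemma:commutation} once to the backward composition and once to the forward composition (via the index reversal), observe that the order-$h$ terms agree, and subtract so that the order-$h^2$ terms collapse into the sum of Lie brackets. Your explicit substitution $a=n-u+1,\ b=n-v+1$ spells out what the paper calls ``reversing the indices,'' and your remark about absorbing the two $\mathcal O(h^3)$ remainders is a minor addition the paper leaves implicit.
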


\begin{proof}
By \cref{lemma:commutation}, we have that the backward iterate is 
\begin{equation*}
    T_{1}  \cdots T_{n}=  
    1
    + h \sum_{l=1}^n V_{l}
    + h^2 \sum_{1 \leq u < v\leq n} V_{u}' V_{v}
    + \mathcal O(h^3)
\end{equation*}

while the forward iterate is obtained by reversing the indices:
\begin{equation*}
    T_{n}  \cdots T_{1}=  
    1
    + h \sum_{l=1}^n V_{l}
    + h^2 \sum_{1 \leq u < v\leq n} V_{v}' V_{u}
    + \mathcal O(h^3).
\end{equation*}
We now see that the difference
\[D(\theta) =  T_{1}  \cdots T_{n}(\theta) -  T_{n}  \cdots T_{1}(\theta)\]
between the backward and forward iterates is of the form
\begin{eqnarray*}
     D(\theta)
     & = & h^2 \sum_{1 \leq u < v\leq n} V_{u}'(\theta) V_{v}(\theta) - V_{v}'(\theta) V_{u}(\theta) + \mathcal O(h^3)\\
     & = & h^2 \sum_{1 \leq u < v\leq n} [V_u, V_v](\theta)  + \mathcal O(h^3),
\end{eqnarray*}
which completes the proof.
\end{proof}

We now consider the case of SGD where the vector fields $V_i$ are given by the gradients of the loss evaluated at the current batch. We give below a definition of the approximate backward iterate in that particular case, although a similar definition can be given for any vector fields. 

\begin{definition}\label{definition:backward_iterate}
Consider the SGD operators $T_i(\theta) = \theta - h \nabla L_i(\theta)$ obtained by taking the gradient of a loss function on a batch $B_i$ of data at step $i$. We denote by $\theta_n = T_n\cdots T_1(\theta_0) = \theta_{n-1} - h \nabla L_n(\theta_{n-1})$ the forward SGD iterate starting at initial point $\theta_0$ and by $\theta_n^B = T_1\cdots T_n(\theta_0)$ the corresponding backward iterate starting at the same initial point. Motivated by \cref{thm:approximate_backward}, we introduce the {\emph{approximate backward iterate}} $\tilde \theta_n$ as follows:
\begin{equation}
    \tilde \theta_n = \theta_n + h^2 \sum_{1\leq i<j\leq n} [\nabla L_i, \nabla L_j] (\theta_0)
\end{equation}
\end{definition}

\cref{thm:approximate_backward} tells us that the true backward iterate and its approximation are within $\mathcal O(h^3)$ of each other:
\begin{equation}
    \|\tilde \theta_n - \theta_n^B\| = \mathcal O (h^3),
\end{equation}
which allows us to use the approximation for learning rates small enough so that the terms in $\mathcal O(h^3)$ can be neglected. The following corollary tells us how the approximate backward iterates can be computed in an iterative fashion by keeping in memory an additional variable $C_n$ of the same size as the network parameters:

\begin{corollary} \label{corollary:approximation}
In the notation above, we can obtain the approximate backward SGD iterate $\tilde \theta_n$ from the forward SGD iterate $\theta_n$ starting at $\theta_0$ in an iterative fashion as follows:
\begin{eqnarray*}
    \theta_n & = & \theta_{n-1} - h \nabla L_{n}(\theta_{n-1}) \\
    g_n & = & g_{n-1} + \nabla L_{n-1}(\theta_0) \\
    H_n & = & H_{n-1} + \nabla^2 L_{n-1}(\theta_0) \\
    C_n      & = & C_{n-1} + H_n \nabla L_{n}(\theta_0) - \nabla^2 L_{n}(\theta_0) g_n \\
    \tilde \theta_n & = & \theta_n + h^2 C_n
\end{eqnarray*}
with $g_0 = g_1 = 0$,  $H_0 = H_1 = 0$, and $c_0 = c_1 =0$.
\end{corollary}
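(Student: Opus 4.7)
The plan is to prove the corollary by induction, first identifying closed forms for the auxiliary variables $g_n$ and $H_n$, then showing that $C_n$ equals the desired double sum of Lie brackets, so that $\tilde\theta_n = \theta_n + h^2 C_n$ matches \cref{definition:backward_iterate} exactly.

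First I would unfold the linear recurrences for $g_n$ and $H_n$. With the initial conditions $g_0 = g_1 = 0$, $H_0 = H_1 = 0$, and updates that add $\nabla L_{n-1}(\theta_0)$ (respectively $\nabla^2 L_{n-1}(\theta_0)$) at step $n$, a one-line induction gives
\begin{equation*}
    g_n = \sum_{i=1}^{n-1}\nabla L_i(\theta_0), \qquad H_n = \sum_{i=1}^{n-1}\nabla^2 L_i(\theta_0).
\end{equation*}
The index shift (updating with $n-1$ rather than $n$) is the key bookkeeping point: it forces these running sums to range over strictly earlier batches, which is exactly what is needed to match the constraint $i < j$ in the target double sum.

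The main step is then to show by induction on $n$ that
\begin{equation*}
    C_n = \sum_{1\le i<j\le n} [\nabla L_i,\nabla L_j](\theta_0).
\end{equation*}
The base case $n=1$ is vacuous since both sides are zero. For the inductive step, the new terms picked up when passing from $C_{n-1}$ to $C_n$ are exactly those with $j = n$, namely $\sum_{i=1}^{n-1}[\nabla L_i,\nabla L_n](\theta_0)$. Expanding this using the Lie-bracket formula from \cref{thm:approximate_backward} and applying the closed forms for $g_n$ and $H_n$ established in the previous step, the sum factors as $H_n\nabla L_n(\theta_0) - \nabla^2 L_n(\theta_0)\, g_n$, which is precisely the stated update rule for $C_n$.

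Combining this identity for $C_n$ with the forward SGD recurrence $\theta_n = \theta_{n-1} - h\nabla L_n(\theta_{n-1})$ produces $\tilde\theta_n = \theta_n + h^2 C_n$ as demanded by \cref{definition:backward_iterate}. I do not anticipate a genuine obstacle: the whole argument is bookkeeping by induction, and the only subtlety is aligning the index conventions between the running sums $g_n, H_n$ and the Lie-bracket double sum. Once this alignment is in place, the factorization of the incremental contribution is immediate.
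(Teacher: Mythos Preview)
Your proposal is correct and follows essentially the same approach as the paper: both arguments split the double sum $\sum_{1\le i<j\le n}[\nabla L_i,\nabla L_j](\theta_0)$ into $C_{n-1}$ plus the $j=n$ terms, then use bilinearity of the Lie bracket together with the running sums $g_n=\sum_{i<n}\nabla L_i(\theta_0)$ and $H_n=\sum_{i<n}\nabla^2 L_i(\theta_0)$ to obtain the stated recurrence. The only cosmetic difference is that you frame the splitting explicitly as an induction and first record the closed forms of $g_n,H_n$, whereas the paper proceeds directly from the definition of $C_n$; the substance is identical.
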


\begin{proof}
By \cref{definition:backward_iterate} of the approximate backward iterates we have that $\tilde \theta_n = \theta_n + h^2 C_n$ with 
\[
C_n = \sum_{1\leq i<j\leq n} [\nabla L_i, \nabla L_j] (\theta_0).
\]

First observe that we can split $C_n$ into two parts
\begin{eqnarray*}
C_n 
& = & \sum_{1\leq i<j\leq n-1} [\nabla L_i, \nabla L_j] (\theta_0) + \sum_{1\leq i \leq n-1} [\nabla L_i, \nabla L_n] (\theta_0)\\
& = & C_{n-1} + [\sum_{1\leq i \leq n-1} \nabla L_i, \nabla L_n](\theta_0) \\
& = & C_{n-1} + \left(\sum_{1\leq i \leq n-1} \nabla^2 L_i(\theta_0) \right)\nabla L_n(\theta_0) \\
&   & \quad - \nabla^2 L_n(\theta_0) \left(\sum_{1\leq i \leq n-1} \nabla L_i(\theta_0)\right) \\
& = & C_{n-1} + H_n \nabla L_n(\theta_0) -  \nabla^2 L_n(\theta_0) g_n,
\end{eqnarray*}
where $H_n$ and $g_n$ are expressed recursively as in the theorem statement.
\end{proof}
 
\section{Approximate implicit regularization of smaller batches} \label{appendix:implicit regularization}

As shown in~\cite{keskar2017large,smith2017don,smith2021on,dherin2022why} for instance,  small batches have an implicit regularization effect, producing solutions with higher test accuracy as the batch size decreases. In this section, we show that approximations of the type given by \cref{lemma:commutation} are useful to understand this implicit regularization effect. We also see how we can go beyond this small batch effect and produce explicit regularizers with even higher test performance based on the idea of iteration order average on the small batches, leveraging \cref{lemma:commutation}.

Throughout the section we will write $V_i(\theta)$ for the negative batch loss gradient $-\nabla L_i(\theta)$ computed on batch $B_i$ in order to keep the notation simple.

\subsection{The effect of small batches}

The idea is to use \cref{lemma:commutation} to understand the implicit regularization effect of small batches, very much in line with the findings in~\citet{smith2021on}.
First, consider two settings. In the first setting, we perform a single gradient update 
\[
T_{\textrm{large}}(\theta) = \theta + h V_B(\theta),
\]
with one large batch $B$ and learning rate $h$. In the second setting, we split the large batch $B$ into $c$ small batches of equal size: $B_{1}, B_{2}, \ldots, B_{c}$ (i.e, each of size $|B|/c$). Then we apply SGD sequentially $c$ times for each of the smaller batch and with learning rate $h' = h / c$:
\[
T_{\textrm{small}}(\theta) = 
(1 + h'V_{c})\cdots (1 + h'V_{2})(1 + h'V_{1})(\theta)
\]
Then  \cref{lemma:commutation} gives us immediately a second order approximation for the second setting:
\[
T_{\textrm{small}}(\theta) =  \theta + h'\sum_{i=1}^c V_{i}(\theta) + {h'}^2\sum_{i < j} V'_{i}(\theta)V_{j}(\theta) + \mathcal O(h^3)
\]
Now it is easy to see that
\[
h V_B(\theta) = h'\sum_{i=1}^c V_i(\theta).
\]
Therefore, we can extract the added implicit regularization that is induced from smaller batches. Namely, we obtain that the composition of the $c$ smaller batches is the same as a single gradient step with the larger batch using learning rate $h = c h'$ plus an additional second-order regularization term:
\[
T_{\textrm{small}}(\theta) = T_{\textrm{large}}(\theta) + \underbrace{{h'}^2\sum_{i < j} V'_{i}(\theta)V_{j}(\theta)}_{\mathrm{Smaller\ Batch\ Regularization}} +\ \mathcal O({h'}^3)
\]

So we can attribute the second order regularization effect of small batch training to this additional term, in line with the computations in~\cite{smith2021on}.

\subsection{Explicit regularization through iteration order averages}

In the previous section, we applied the small batches in some chosen order. Another order may have worked as well, although producing a different second order regularization term.
This begs the question of whether we can instead take the average over all possible iteration orders so as to produce an iterate for which no particular order is preferred:
\begin{equation}
    T_{\textrm{perm}}(\theta) = \frac 1{c!}\Big(
    \sum_{\sigma \in \mathrm{Perm}_c}
    (1 + h' V_{\sigma(1)})\cdots (1 +h' V_{\sigma(n)})(\theta)\Big),
\end{equation}
where $\mathrm{Perm}_c$ is the permutations of $c$ elements.
Again \cref{lemma:commutation} gives us the second order approximation for this new update rule:

\[
T_{\textrm{perm}}(\theta) = T_{\textrm{large}}(\theta) + \frac{1}{2}{h'}^2\sum_{i \neq j} V'_{i}(\theta)V_{j}(\theta) + \mathcal O({h'}^3).
\]
In the next section, we show that the iteration-order-averaging term, that we can extract from the computation above, namely,
\[
\lambda \sum_{i \neq j} V'_{i}(\theta)V_{j}(\theta),
\]
produces a more powerful regularizer than the one obtained from a single ordering of the small batches. In a way, this new regularizer emulates a mixture of models, each of which is produced by a different ordering of the iterations in a window of $c$ batches (rather than from a different random seed). The next section shows the benefits of this new ``ordering-free'' regularizers experimentally.

\newpage

\subsubsection{Experiments}

We trained a MLP with 5 layers of 500 neurons each with stochastic gradient descent with no regularization to learn the 10 classes of Fashion-MNIST dataset~\cite{xiao2017fashion}. We used a learning rate of $h=0.001$ (and $h' = h / c$), and a batch-size of $|B| = 2048$\footnote{Or the closest integer divisible by $c$; e.g., $2049$ for $c=3$.}. We examine three different shrinking factors $c\in\{2, 3, 4\}$. In each setting, we test the values of $\lambda\in\{0.5h^2, h^2, 2h^2, 4h^2\}$. The following figures present the train and test loss over training compared to training with large-batch $|B|=2048$ and learning rate $h' = h /c $, and training with small-batch $2048/c$ with learning rate $h'$. The experiment shows that our method can outperform vanilla training both for large and small batches with minimal tuning of $\lambda$ for this specific experiment.

\begin{figure}[ht]
\begin{minipage}{0.45\textwidth}
\centering
\includegraphics[width=\textwidth]{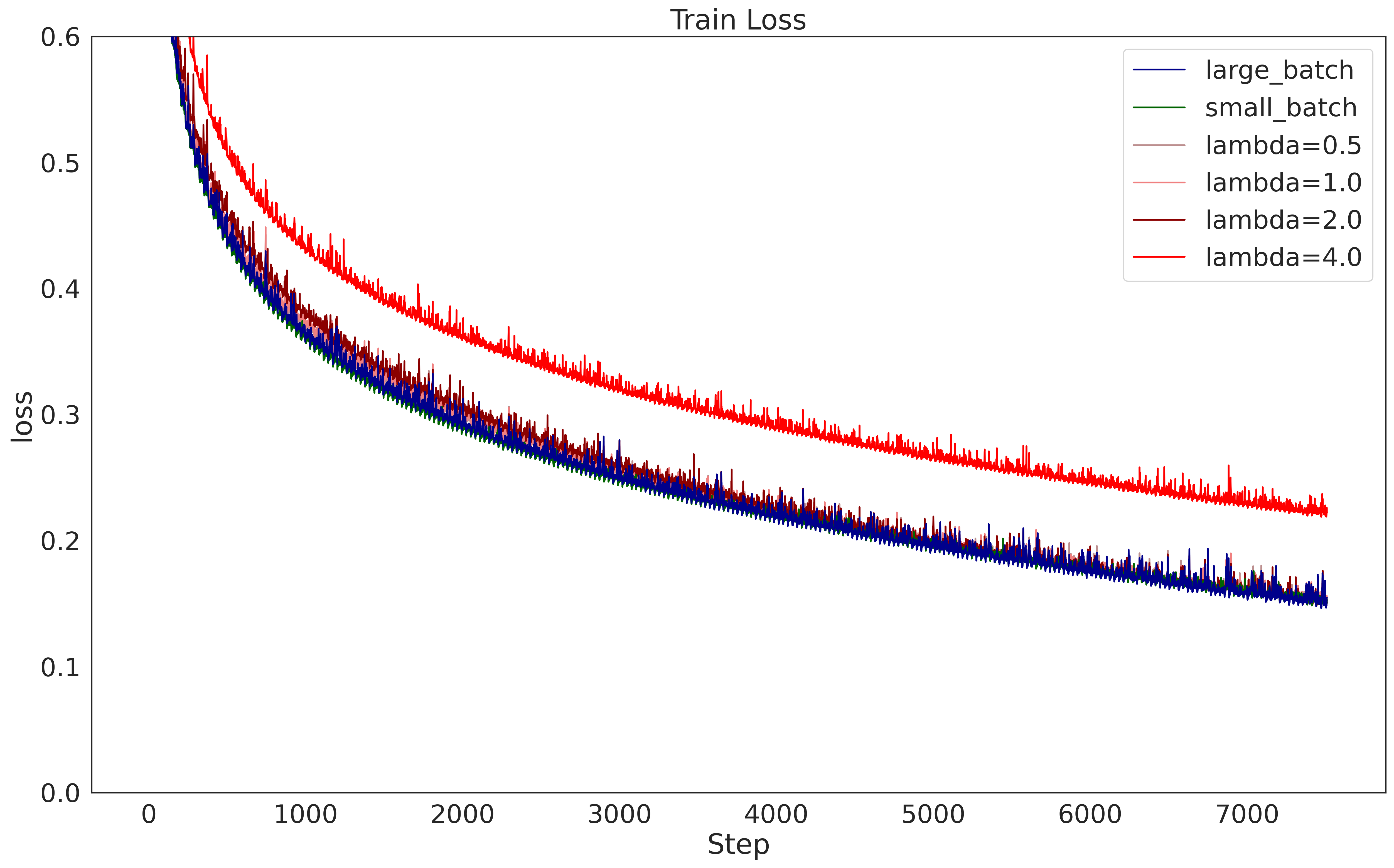}
\caption{Train curves for injecting regularization resulting from splitting the batch in two (i.e., $c=2$).\\ }
\end{minipage}
\hspace{0.5cm}
\begin{minipage}{0.45\textwidth}
\centering
\includegraphics[width=\textwidth]{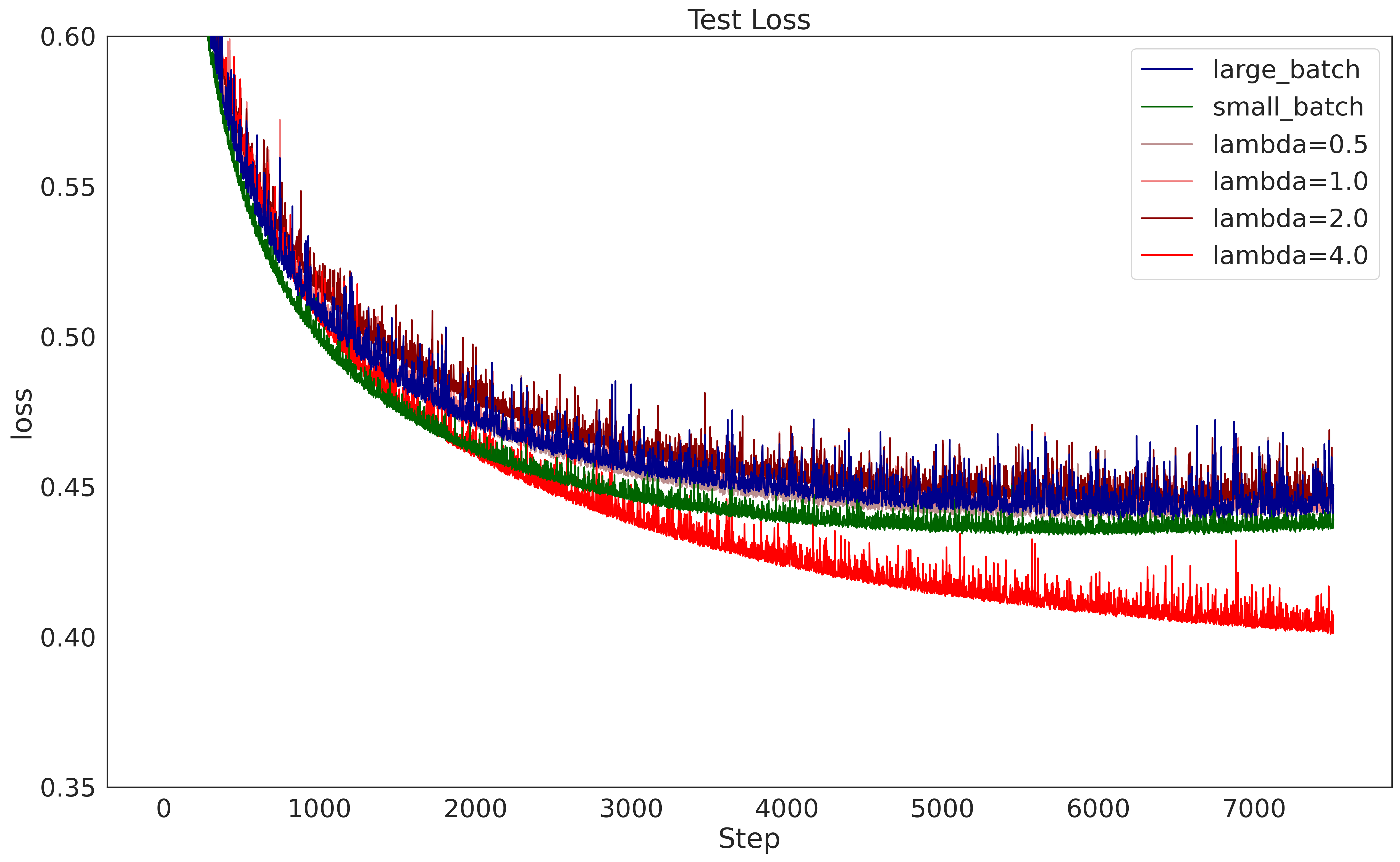}
\caption{Test curves for injecting regularization resulting from splitting the batch in two (i.e., $c=2$). As you can see, the performance improves as we increase the $\lambda$.}
\end{minipage}
\end{figure}

\begin{figure}[ht]
\begin{minipage}{0.45\textwidth}
\centering
\includegraphics[width=\textwidth]{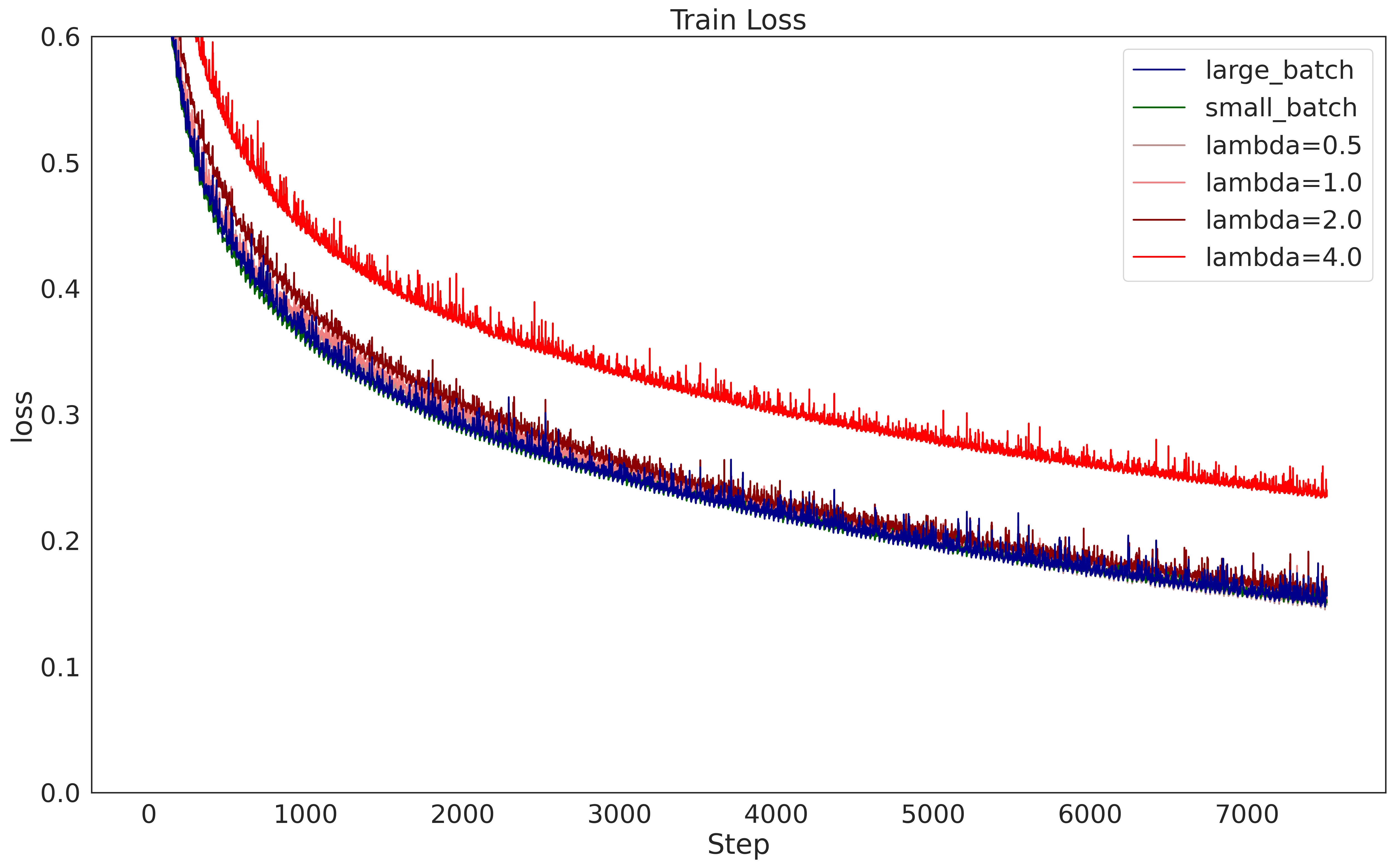}
\caption{Train curves for injecting regularization resulting from splitting the batch in three (i.e., $c=3$).\\ \\ }
\end{minipage}
\hspace{0.5cm}
\begin{minipage}{0.45\textwidth}
\centering
\includegraphics[width=\textwidth]{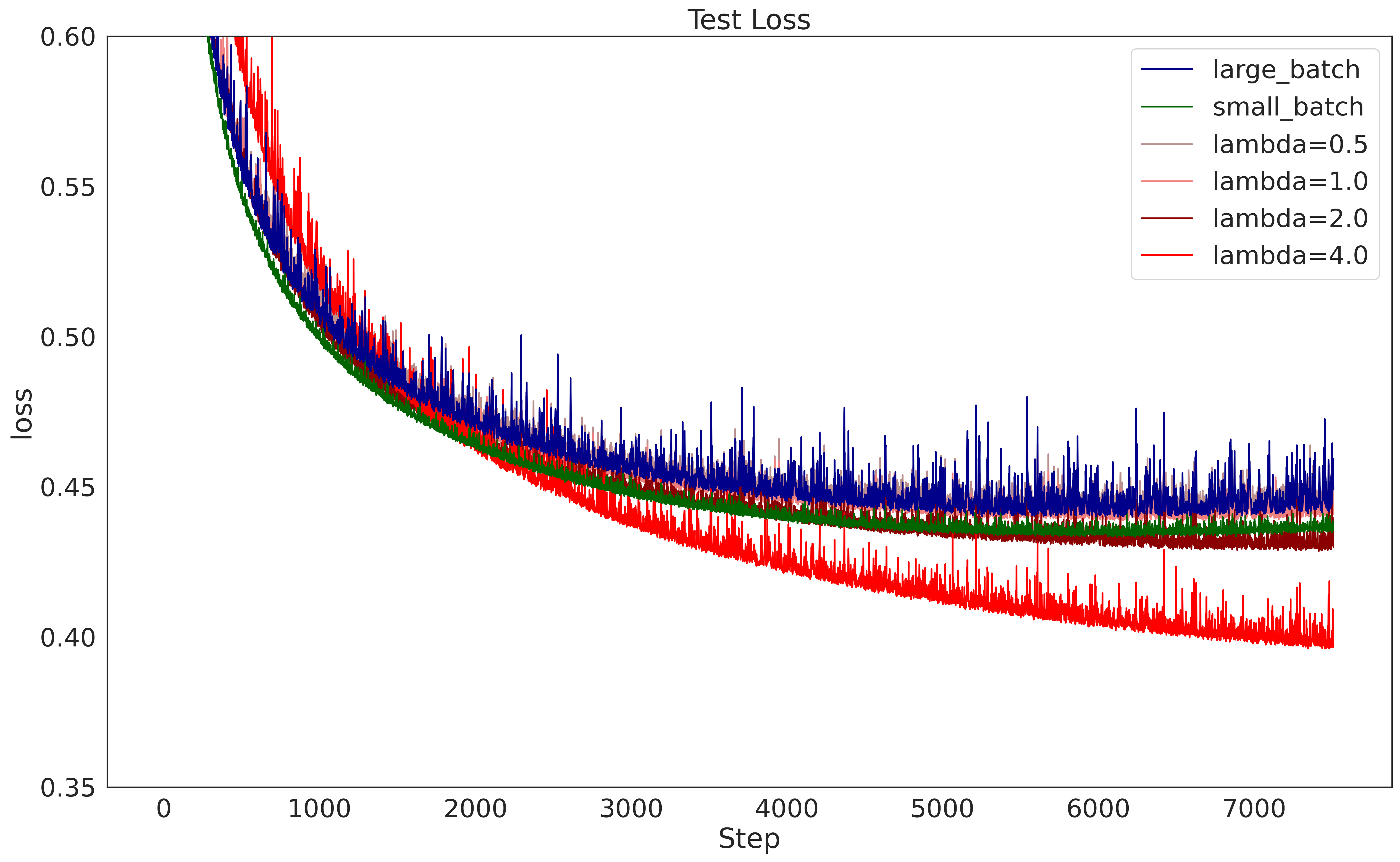}
\caption{Test curves for injecting regularization resulting from splitting the batch in three (i.e., $c=3$). Similar to the case where $c=2$ above, the performance improves as we increase the $\lambda$.}
\end{minipage}
\end{figure}

\begin{figure}[ht]
\begin{minipage}{0.45\textwidth}
\centering
\includegraphics[width=\textwidth]{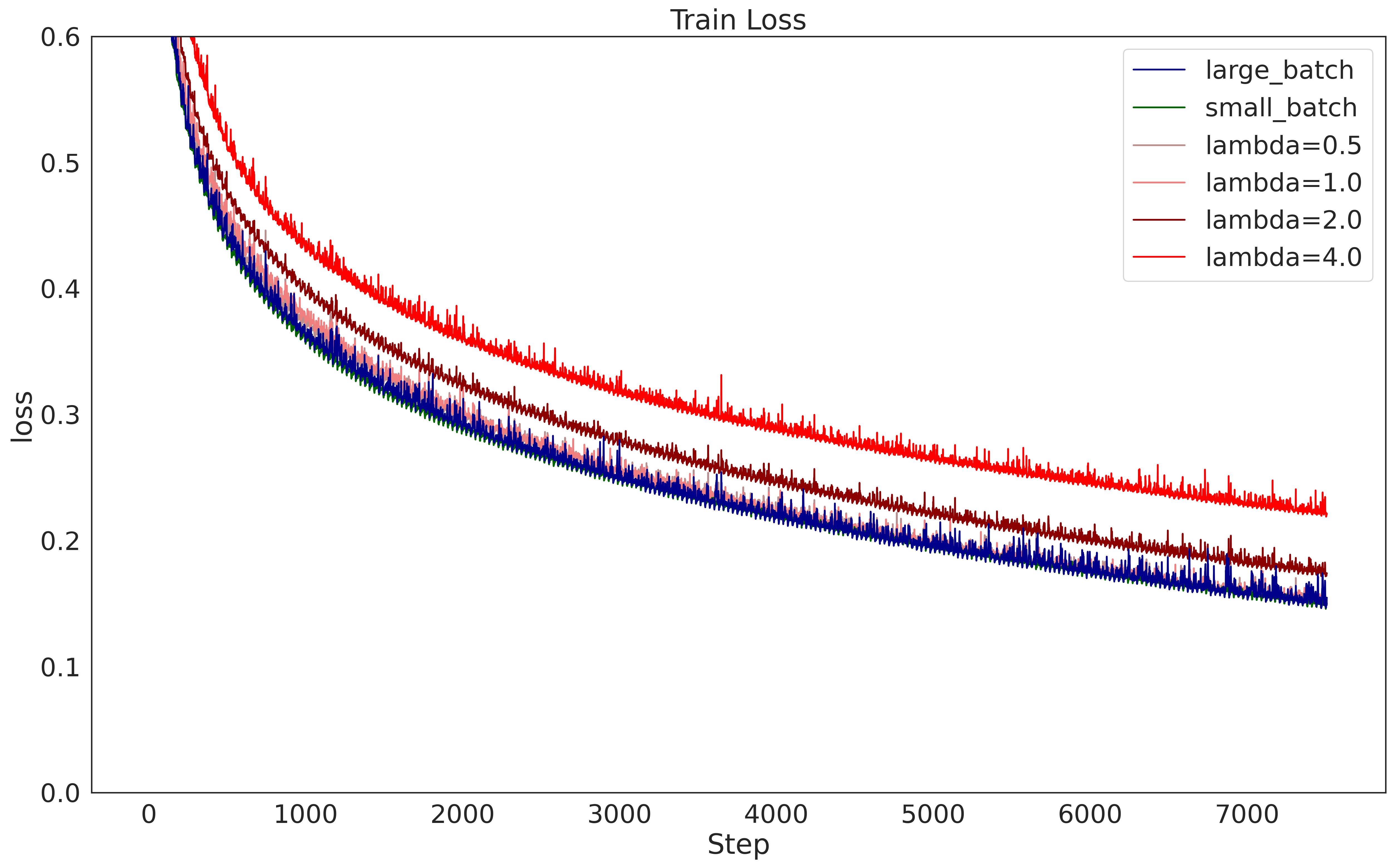}
\caption{Train curves for injecting regularization resulting from splitting the batch in four (i.e., $c=4$).\\ \\ }
\end{minipage}
\hspace{0.5cm}
\begin{minipage}{0.45\textwidth}
\centering
\includegraphics[width=\textwidth]{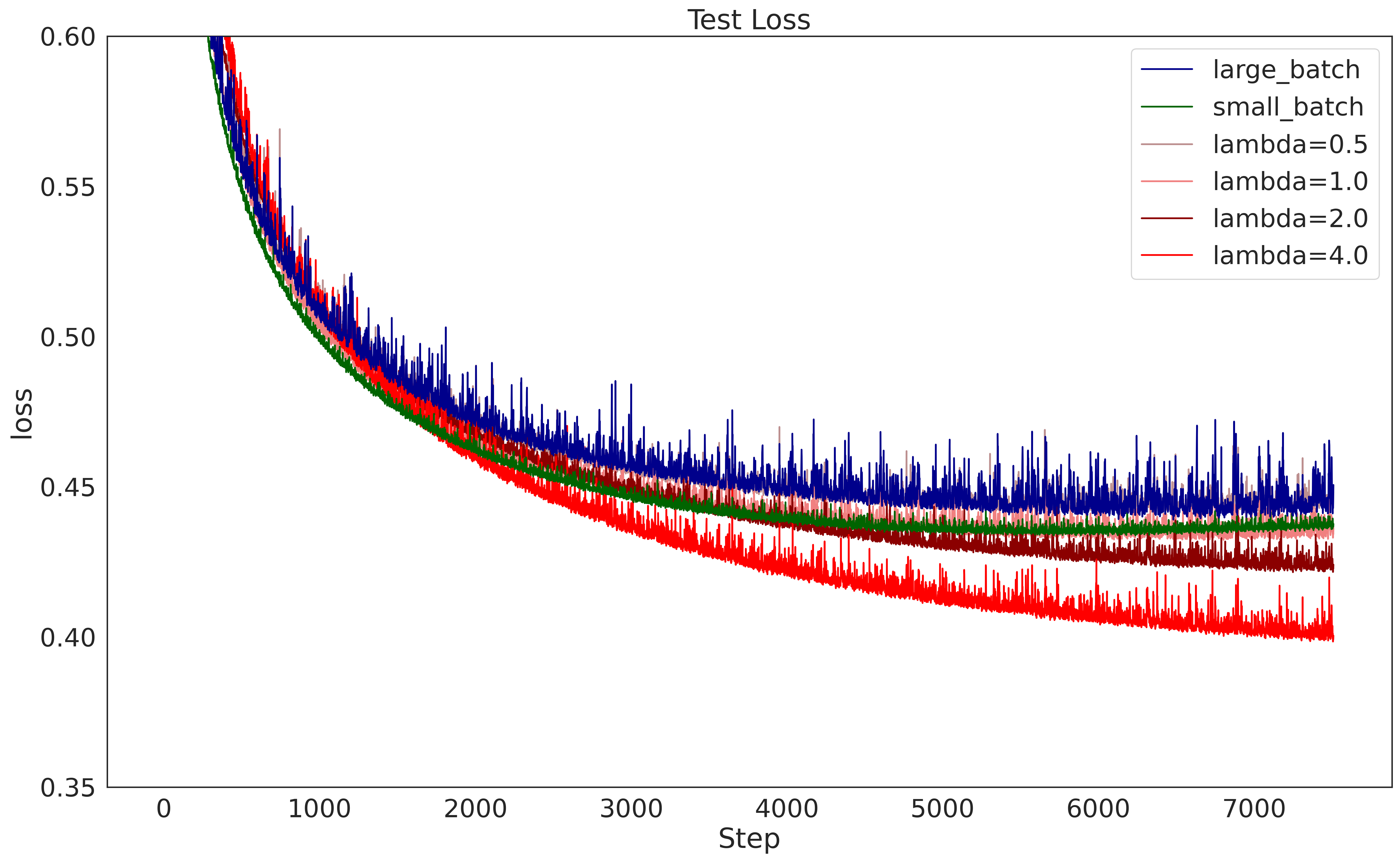}
\caption{Test curves for injecting regularization resulting from splitting the batch in four (i.e., $c=4$). Similar to the cases above, the trend continues and is even more visible; Performance improves as we increase the $\lambda$.}
\end{minipage}
\end{figure}

\newpage

\section{Proof of \cref{thm:backward_forward}}
\label{appendix:proof_of_theorem_2.6}

\subsection{Definitions}

Before we dive into the proof, let us define the following three different notions of convergence of random variables. First we have the strongest notion, which is convergence almost surely. That is, a sequence of random variables $X_n$ converges to a limiting random variable $X_\infty$ almost surely if
\begin{equation*}
    \P(\lim_{n \to \infty} X_n = X_\infty) = 1.
\end{equation*}
This can be thought of as a sort of point-wise convergence.
A weaker notion of convergence which is implied by almost sure convergence (see Theorem 2.7 (i) in~\citet{vandervaart}) is convergence in probability.
The sequence $X_n$ of random variables converges to the random variable $X_\infty$ in probability if for every $\varepsilon > 0$ the following holds
\begin{equation*}
    \lim_{n \to \infty} \P(|X_n - X_\infty| > \epsilon) = 0.
\end{equation*}
That is, for large $n$ it becomes more and more likely that $X_n$ is close to $X_\infty$.
The weakest notion of convergence is convergence in distribution, which is implied by convergence in probability (see Theorem 2.7 (ii) in~\citet{vandervaart}).
There are many equivalent ways to define this, but the most concrete way is the following: Let $F_n(x) = \P(X_n \leq x)$ be the cumulative distribution function for $X_n$ and let $F_\infty(x)$ be the corresponding cumulative distribution function for $X_\infty$,
then we say that the sequence $X_n$ converges in distribution to $X_\infty$ if for every continuity point $x$ of the cumulative distribution function $F_\infty$ we have the following
\begin{equation*}
    \lim_{n \to \infty} F_n(x) = F_\infty(x).
\end{equation*}
If we define $\mu_n := dF_n$ and $\mu_\infty := dF_\infty$, then when we say that $\mu_n \to \mu_\infty$ we mean convergence in distribution in the above sense.

\subsection{Proof}

Consider a sequence $T_i:\Omega \rightarrow \Omega$, $i=1,2, \dots$ of independent and identically distributed random operators. The probability distribution of the forward iterates $\theta_n = T_n T_{n-1}\cdots T_1 (\theta_0)$ is given by 
\[
\mu_n = (T_n T_{n-1}\cdots T_1)_* \delta_{\theta_0},
\]
where $\delta_{\theta_0}$ is the delta distribution concentrated at $\theta_0$ (i.e. $\delta_{\theta_0}(A) = 1$ if $\theta_0\in A$ and 0 otherwise).

We want to show that when the backward iterates converge to a random point (randomness is due to the sampling of the random operators $T_i$'s), i.e., when
\[
 T_1T_2\cdots T_n(\theta_0) \longrightarrow \theta^* \quad\textrm{as}\quad n\rightarrow \infty,
\]
then this implies that
\begin{itemize}
    \item the probability distribution of the forward iterates from $\theta_0$ converge to a stationary probability measure $\mu_n \rightarrow \mu_{\theta^*}$ 
    \item the random point $\theta^*$ is distributed according to the same forward iterate stationary distribution $\mu_{\theta^*}$. 
\end{itemize}

To see that let $Y_n^{\theta_0}$ denote the forward iterates
\begin{equation*}
    Y_n^{\theta_0} = T_nT_{n-1}\ldots T_1(\theta_0),
\end{equation*}
and let $X_n^{\theta_0}$ denote the backward iterates
\begin{equation*}
    X_n^{\theta_0} = T_1T_{2}\ldots T_n(\theta_0).
\end{equation*}
Define the cumulative distribution function (CDF) for the random vector $Y_n^{\theta_0}$ and $\theta = (\theta^1, \dots, \theta^d) \in \mathbb{R}^d$ as
\begin{equation*}
    F_n^{\theta_0}(\theta) := \P((Y_n^{\theta_0})_1 \leq \theta^1,\ldots,(Y_n^{\theta_0})_d \leq \theta^d).
\end{equation*}
By independence of the random operators $T_i$'s we know by symmetry that $F_n^{\theta_0}(\theta)$ is also the CDF of $X_n^{\theta_0}$.

Under our assumption that the backward iterates converge to a random point $\theta^*$ (almost sure convergence) we will define the CDF of that random point as, $F_\infty^{\theta_0}(\theta)$ and for the sake of the proof we denote that point with $X_\infty^{\theta_0}$ instead of $\theta^*$ to highlight that it is a random vector.

Now, our assumption that the backward iterates converge to a point implies that the random vector $X_n^{\theta_0}$ converges to $X_\infty^{\theta_0}$ almost surely, in the sense defined above. As alluded to in the previous section above, the almost sure convergence implies convergence in distribution, i.e., the convergence of the CDF $\lim_{n \to \infty} F_n^{\theta_0}(\theta) = F_\infty^{\theta_0}(\theta)$, for every $\theta$ where $F_\infty^{\theta_0}$ is continuous.

Since $X_n^{\theta_0}$ and $Y_n^{\theta_0}$ have the same distribution (same CDF) we have immediately that $Y_n^{\theta_0}$ also converges in distribution to $X_\infty^{\theta_0}$.
The limiting stationary distribution $\mu_{\theta^*}$ is simply the probability measure corresponding to $F_{\infty}^{\theta^*}$, i.e. $\mu_{\theta^*} := dF_{\infty}^{\theta_0}$.

\section{Proof of \cref{lem:contraction}}
\label{appendix:proof_of_contraction}

Consider a smooth function $L:\mathbb{R}^d \to \mathbb{R}$ satisfying \cref{eq:strongly_convex,eq:S11}.

We want to show the following inequality
\begin{equation*}
    \left \| T(\theta_1)-T(\theta_2) \right \| \leq \sqrt{1 - 2 h m + h^2 M^2} \left \| \theta_1-\theta_2 \right \|.
\end{equation*}
for the gradient operator $T(\theta)=\theta-h\nabla L(\theta)$.

We can see that by first expanding the square of the operator difference:
\begin{align*}
    \| T(\theta_1) - T(\theta_2) \|^2 & = \|\theta_1-\theta_2\|^2 \\
    & - 2h \langle \nabla L(\theta_1) - \nabla L(\theta_2), \theta_1-\theta_2 \rangle \\
    & + h^2 \| \nabla L(\theta_1) - \nabla L(\theta_2) \|^2.
\end{align*}
Now applying the strict convexity conditions, we get
\begin{align*}
    \| T(\theta_1) - T(\theta_2) \|^2 & \leq \|\theta_1-\theta_2\|^2 - 2h m \|\theta_1-\theta_2\|^2 \\
    & + h^2 M^2 \|\theta_1-\theta_2\|^2 \\
    & = (1 - 2h m + h^2 M^2) \|\theta_1-\theta_2\|^2,
\end{align*}
which ends the proof by taking the square root on both sides.

\section{Discussion on the assumption of contractions.}
\label{appendix:lossmet}
The following discussion demonstrates that the contraction hypothesis required in \cref{lemma:contraction_principle} can be satisfied under standard conditions that frequently arise in deep learning. Specifically, we show that gradient descent transformations are contractions with respect to a natural loss-based pseudo-metric when the loss function satisfies both Lipschitz and Polyak-Łojasiewicz conditions.

\paragraph{Setup and Assumptions}
Let $S$ be a domain in $\mathbb{R}^{M}$ representing the parameter space, and let $L:S\rightarrow\mathbb{R}_{\geq0}$ be
a \emph{loss function} to be minimized, having the form
\[
    L(\theta)=\frac{1}{m}\sum_{i=1}^{m}l_{i}(\theta).
\]
We make the following standard assumptions:
\begin{enumerate}
    \item Each individual loss $l_{i}\geq0$ and the global minimum value of $L$ is $0$. This \emph{interpolation} assumption is reasonable for over-parametrized systems where the model can perfectly fit the training data.
    \item All functions are differentiable.
    \item There exists a constant $C$ such that all gradients $\nabla l_{i}$ (and therefore $\nabla L$) are $C$-Lipschitz continuous in the relevant parameter space $S$.
\end{enumerate} 
\paragraph{The Polyak-Łojasiewicz Condition}
Additionally, we assume that $L$ satisfies
the \emph{Polyak-Łojasiewicz (PL)} condition: there exists a constant $\mu>0$ such that
\[
    \frac{1}{2}\left\Vert \nabla L(\theta)\right\Vert ^{2}\geq\mu L(\theta)
\]
for all $\theta\in S$. This condition, while satisfied by strongly convex
functions, is particularly important because it also holds for
over-parametrized neural networks (which are typically non-convex),
as demonstrated in~\cite{Liu2022}. The PL condition essentially ensures that the gradient magnitude provides a lower bound on the distance to optimality.

\paragraph{Loss Pseudo-Metric}
To analyze the contraction properties of gradient-based updates, we introduce a natural pseudo-metric based on the loss function. 

\textbf{Definition.} For the parameter space $S$, we define the \emph{loss pseudo-metric} $d: S \times S \to \mathbb{R}_{\geq 0}$ by:
\begin{equation*}
    d(x,y) = 
    \begin{cases}
        L(x)+L(y) & \text{if}\, x \neq y, \\
        0 & \text{if}\, x = y.
    \end{cases}
\end{equation*}

\textbf{Properties.} This function satisfies the standard metric axioms:
\begin{enumerate}
    \item \textbf{Symmetry:} $d(x,y) = L(x)+L(y) = L(y)+L(x) = d(y,x)$.
    \item \textbf{Triangle inequality:} For any $x,y,z \in S$:
    \[
        d(x,y) = L(x)+L(y) \leq L(x)+L(z)+L(z)+L(y) = d(x,z)+d(z,y),
    \]
    which follows from the non-negativity of $L$.
    \item \textbf{Pseudo-metric property:} $d(x,y) = 0$ for $x \neq y$ if and only if $x,y \in Z$, where $Z := \{\theta \in S : L(\theta)=0\}$ denotes the set of global minima.
\end{enumerate}

The key insight is that $d$ measures the ``total suboptimality'' of two points. Unlike a true metric, distinct points in the zero-loss set $Z$ have distance zero from each other, hence the term \emph{pseudo}-metric.

\paragraph{Contraction Analysis for Full-Batch Gradient Descent}
We now establish that gradient descent is a contraction in the loss pseudo-metric $d$. The Lipschitz condition on $\nabla L$ immediately implies that for any $x, y \in S$:
\begin{equation*}
    L(y)-L(x)=\int_{0}^{1}\nabla L(x+t(y-x)) \cdot (y-x) \, dt \leq \langle\nabla L(x),y-x\rangle+\frac{C}{2}\left\Vert y-x\right\Vert ^{2}.
\end{equation*}

Consider the full-batch gradient descent transformation $T(x)=x-h\nabla L(x)$ with learning rate $h > 0$. 
Applying the above inequality with $y = T(x)$, we obtain:
\begin{align*}
    L(T(x))-L(x) &\leq\langle\nabla L(x),T(x)-x\rangle+\frac{C}{2}\left\Vert T(x)-x\right\Vert ^{2}
    \\
    & =\langle\nabla L(x),-h\nabla L(x)\rangle+\frac{C}{2}\left\Vert -h\nabla L(x)\right\Vert ^{2}
    \\
    &=-h\left(1-\frac{Ch}{2}\right)\left\Vert \nabla L(x)\right\Vert ^{2}.
\end{align*}
Combining this with the PL condition and requiring $0<h<2/C$ (a standard step-size constraint), we get
\[
    L(T(x))-L(x) \leq -h\left(1-\frac{Ch}{2}\right)\left\Vert \nabla L(x)\right\Vert ^{2} \leq -2h\mu\left(1-\frac{Ch}{2}\right)L(x).
\]
This yields the key contraction result:
\[
    L(T(x))\leq\left(1-h\mu(2-Ch)\right)L(x)
\]
for sufficiently small $h$. The factor $k := 1-h\mu(2-Ch) < 1$ represents the contraction rate.
That is, for two points $x,y \in S$, we have
\[
    d(Tx,Ty)=L(Tx)+L(Ty)\leq k L(x)+k L(y)=kd(x,y),
\]
establishing that $T$ is a contraction in the loss pseudo-metric $d$.

\paragraph{Extension to Stochastic Mini-Batch Updates}
For stochastic gradient descent with mini-batches, we consider transformations of the form 
\begin{equation*}
    T_{i}(x) = x - h \nabla \left( \frac{1}{m} \sum_{j \in I_i} l_{j} \right)(x),
\end{equation*}
where $I_i$ denotes a mini-batch (a subset of indices) of size $m$. Through a similar to above but more
technical argument (see~\cite[p. 3-4]{Bassily2018}), one can show that
\[
    \mathbb{E}\left[L(T_{i}(x))\right]\leq k L(x),
\]
for some contraction factor $0<k<1$, provided the hyperparameters are chosen appropriately. The key insight is that while individual mini-batch updates may not be contractions, they are contractions \emph{in expectation}.

\paragraph{Application to the Backward Contraction Principle}
The contraction analysis established above directly enables the application of our backward contraction principle (\cref{lemma:contraction_principle}) to gradient-based optimization. Several technical points ensure the theory applies seamlessly:

\begin{itemize}
    \item \textbf{Pseudo-metric compatibility:} The contraction principle extends naturally from true metrics to pseudo-metrics. In our loss pseudo-metric framework, the zero-loss set $Z$ effectively becomes a single point when passing to the metric completion, serving as the unique fixed point of both $T$ and any $T_{i}$.
    
    \item \textbf{Bounded displacement condition:} The required condition \cref{condition:boundedness} can be satisfied by restricting the parameter space $S$. For instance, imposing $L(\theta) \leq B$ for some bound $B$ creates an invariant set since gradient descent decreases $L$ (and $T_i$ decreases $\mathbb{E}[L]$).
    
    \item \textbf{Stochastic contractions:} For mini-batch updates that are only contractions in expectation, more sophisticated analysis techniques are available (see~\cite{diaconis1999iterated} and references therein).
\end{itemize}

This framework provides the rigorous foundation needed to apply our backward iteration theory to practical deep learning optimization algorithms.

\end{document}